\numberwithin{equation}{section}
\newtheorem{theorem}{Theorem}
\newtheorem{lemma}{Lemma}
\newtheorem{proposition}{Proposition}
\newtheorem{definition}{Definition}
\newtheorem{remark}{Remark}
\newtheorem{example}{Example}
\def\NN{\mathbb N}
\def\RR{\mathbb R}
\begin{document}
	
	\title{Nonlinear functional regression by functional deep neural network with kernel embedding}
	\author{Zhongjie Shi\\ \footnotesize Department of Statistics and Actuarial Science, University of Hong Kong, \\ \footnotesize
		 Pok Fu Lam Road, Hong Kong, Email: zshi2@hku.hk  
	\and
	Jun Fan\\ \footnotesize Department of Mathematics, Hong Kong Baptist University, \\
	\footnotesize  Kowloon, Hong Kong, Email: junfan@hkbu.edu.hk
	\and
	Linhao Song\\ \footnotesize School of Mathematics and Statistics
	Central South University\\
	 \footnotesize Hunan, 410083, China, Email: songincsu@csu.edu.cn
	\and
	Ding-Xuan Zhou\\ \footnotesize School of Mathematics and Statistics, University of Sydney, \\
	\footnotesize Sydney, NSW 2006, Australia, Email: dingxuan.zhou@sydney.edu.au
	\and
	Johan A.K. Suykens \\ \footnotesize Department of Electrical Engineering, ESAT-STADIUS, KU Leuven, \\ \footnotesize
	Kasteelpark Arenberg 10, B-3001 Leuven, Belgium, Email: johan.suykens@esat.kuleuven.be}

	\date{}
	
	\maketitle
	
\begin{abstract}
	Recently, deep learning has been widely applied in functional data analysis (FDA) with notable empirical success. However, the infinite dimensionality of functional data necessitates an effective dimension reduction approach for functional learning tasks, particularly in nonlinear functional regression. In this paper, we introduce a functional deep neural network with an adaptive and discretization-invariant dimension reduction method. Our functional network architecture consists of three parts: first, a kernel embedding step that features an integral transformation with an adaptive smooth kernel; next, a projection step that utilizes eigenfunction bases based on a projection Mercer kernel for the dimension reduction; and finally, a deep ReLU neural network is employed for the prediction. Explicit rates of approximating nonlinear smooth functionals across various input function spaces by our proposed functional network are derived. Additionally, we conduct a generalization analysis for the empirical risk minimization (ERM) algorithm applied to our functional net, by employing a novel two-stage oracle inequality and the established functional approximation results. Ultimately, we conduct numerical experiments on both simulated and real datasets to demonstrate the effectiveness and benefits of our functional net.
	
	
\end{abstract}

\noindent {\it Keywords}: Deep learning theory, functional deep neural network, kernel smoothing,  nonlinear functional, approximation rates, learning rates

\section{Introduction}
Functional data analysis (FDA) is a rising subject in recent scientific studies and daily life, which analyzes data with information about curves, surfaces, or anything else over a continuum, such as time, spatial location, and wavelength which are commonly considered in physical background \citep{ramsay2005,ferraty2006nonparametric}. 

{One significant task in FDA is functional regression, which aims to learn the relationship between a functional covariate and a scalar (or functional) response. Among the various regression models, the functional linear model and its variants, such as the generalized linear model, single-index model, and multiple-index model, are widely utilized and studied \citep{Yao2005, Muller2005, Chen2011}. While these models are simple and interpretable, they can struggle in scenarios where the true relationship is nonlinear. Given the impressive empirical success of deep learning in automatic feature extraction and flexible architecture design for various data formats, it is logical to explore the application of deep learning techniques in functional nonlinear regression.}

In functional regression, the data are obtained from a two-stage process. The first-stage dataset $\{f_i(\cdot), y_i\}_{i=1}^m$ are i.i.d. sampled from some true unknown probability distribution, where $f_i$ are random functions and $y_i$ are corresponding responses. However, in practice, we usually cannot observe $f_i$ over the entire continuum. Instead, we observe $f_i$ at discrete grid points $\{t_{i,j}\}_{j=1}^{n_i}$. Thus, the second-stage data $\{\{f_i(t_{i,j})\}_{j=1}^{n_i}, y_i\}_{i=1}^m$ is what we typically work with in practice. Since the functional data are intrinsically infinite-dimensional, one key idea of utilizing deep learning is to first summarize the information contained in each function $f_i$ into a finite-dimensional vector based on the discrete observations $\{f_i(t_{i,j})\}_{j=1}^{n_i}$, and then feed the resulting multivariate data into a deep neural network. 

{
	Recently, much effort has been dedicated to this idea in the literature. \cite{chen1995universal} proposed a functional network structure that directly uses the discrete observations \(\{f_i(t_{i,j})\}_{j=1}^{n}\) at grid points $t_{i,j}=t_j$, treating these observations as multivariate input data for a shallow network. This straightforward network architecture was demonstrated to be universal in \cite{chen1995universal} and was subsequently extended to a deeper version by \cite{lu2021learning}. However, this structure is mesh-dependent, meaning that it requires a fixed sampling grid $\{t_{j}\}_{j=1}^{n}$. Consequently, if this grid is altered, it would be computationally costly to retrain the model. A straightforward approach to designing mesh-independent (or discretization invariant) models is to utilize basis representations. For any input function $f$, the truncated basis expansion $\sum_{k=1}^{d_1} \langle f, \phi_k\rangle \phi_k$ serves as an approximation of $f$ by projecting it onto the subspace spanned by an orthonormal basis $\left\{\phi_k\right\}_{k=1}^{d_1}$. The coefficient vector $[\langle f,\phi_1\rangle,\cdots,\langle f,\phi_{d_1}\rangle)]^T$ can then be fed into various deep neural networks, such as Multi-Layer Perceptron (MLP) and Radial-Basis Function Networks (RBFN) \citep{rossi2005representation}. The universality and consistency of these models have been established in \cite{rossi2005functional}. The choice of basis for dimension reduction can vary. One approach is to use preselected bases that do not require learning, such as Legendre polynomials \citep{mhaskar1997neural}, B-splines \citep{rice2001nonparametric,cardot2003spline}, and Fourier basis \citep{kovachki2021universal}. However, these preselected bases do not fully leverage the information contained in the data. As a result, we often need a high-dimensional vector to reconstruct the original function, which can lead to the curse of dimensionality in the subsequent deep neural network. Therefore, it might be better to consider a data-dependent basis, such as the principal component basis \citep{besse1986principal,silverman1996smoothed,yao2005functional} or neural network basis \citep{rossi2002functional,yao2021deep}. However, these approaches come with their own drawbacks. While the principal component method effectively captures information from the input data, it overlooks the information derived from the output data. In the case of neural network basis, there are numerous free parameters within the basis layer that must be trained alongside the parameters of the subsequent network layers. This substantially increases the model's capacity, requiring a larger dataset to prevent the risk of overfitting.}

{
	Therefore, this paper aims to design a discretization-invariant dimension reduction method that adapts to both input and output data without increasing the complexity of the network structure. The key technique in our dimension reduction method is called kernel embedding, which is inspired by the kernel mean embedding approach in \cite{smola2007hilbert}, where a distribution is mapped to an element in a reproducing kernel Hilbert space (RKHS) through an embedding map. Our dimension reduction process can be summarized in two main steps. First, we specify an embedding kernel $K: \Omega \times \Omega \to \RR$ and apply the integral transformation $L_K f_i(x)=\int_{\Omega} K(x,t)f_i(t) dt$ induced by $K$ as the embedding map. The input functional data $f_i$ will then be transformed to $L_Kf_i$. Since only the second-stage data $\{f_i(t_{i,j})\}_{j=1}^{n_i}$ are accessible, this embedding can be approximated using an empirical alternative $\widehat{L_K}$, which serves as an estimate of $L_K$. Formal definition of $\widehat{L_K}$ will be provided in the subsequent section.  After the embedding step, we then map $\widehat{L_K}f$ to a $d_1$ dimensional coefficient vector, through a projection onto the subspace spanned by the first $d_1$ eigenfunctions of the integral operator induced by a projection Mercer kernel ${K_0}$. This coefficient vector is then fed into a deep neural network to predict the output. Since the embedding kernel $K$ is equipped with hyperparameters that are optimized through cross-validation, this approach offers adaptivity in the dimension reduction process without increasing the model's capacity.}

{
	In this paper, we propose a functional network structure that integrates the previously discussed dimension reduction method. Theoretical analysis and numerical experiments are conducted to evaluate this structure. In summary, our contributions are as follows:
	\begin{enumerate}
		\item We evaluate the expressivity of our proposed functional network by deriving explicit rates of approximating nonlinear smooth functionals defined on various input function spaces. For input functions within Besov spaces or mixed smooth Sobolev spaces, we establish logarithmic rates of approximation. For input functions within Gaussian RKHSs, we enhance the approximation rates to $\exp(-\alpha(\log M)^{\beta})$, where $\alpha > 0$ and $0 < \beta < 1$. This improvement in the rates indicates that our functional network can exploit the regularity within the input functions.
		\item We propose a learning algorithm through empirical risk minimization (ERM) applied to our functional network based on the second-stage data. Generalization analysis is carried out on this learning algorithm within the classical learning theory framework. Specifically, we establish a new two-stage oracle inequality that considers both the first-stage sample size $m$ and the second-stage sample size $n$. By applying this new oracle inequality along with a theoretically optimal quadrature scheme, we derive convergence rates for learning target functionals defined on various input function spaces. Our theoretical findings reveal that the second-stage sample size required for achieving unimpaired generalization error can be significantly smaller than that of the first-stage sample size, indicating the potential of our functional network in handling sparsely observed functional data.
		\item Numerical experiments conducted on both simulated and real datasets indicate that our functional network utilizing kernel embedding surpasses the performance of functional networks that rely on other baseline dimension reduction techniques, including discrete observations, B-splines, FPCA, and neural network bases. The numerical results also provide valuable insights into the key factors affecting the generalization performance of our functional network. Furthermore, we validate the advantages of our approach by showcasing its discretization-invariant properties, its adaptability to a range of datasets, and its robustness to noisy observations.
	\end{enumerate}
}

The rest of the paper is arranged as follows. In Section 2, we introduce the architecture
of our functional deep neural network with kernel embedding and elaborate on its practical implementation. In Section 3,  we conduct the theoretical analysis of our functional network by presenting its approximation rates for various input spaces when the target functional is smooth. In Section 4, we carry out a generalization analysis by first providing a two-stage oracle inequality, and then utilizing it to derive learning rates for the input function spaces considered in Section 3. {Section 5 discusses related work and summarizes the theoretical findings of this paper.} In Section 6, numerical experiments are performed to verify the performance and benefits of our proposed functional network. The proofs of the main results in this paper are given in the Appendix.

\section{Architecture of Functional Deep Neural Network with Kernel Embedding}
\label{section1}
In this section, we give a detailed introduction to the architecture of our functional deep neural network with the usage of kernel embedding. Suppose that the input function space $\mathcal{F}$ is a compact subset of $L_\infty(\Omega)$ and of $L_2(\Omega)$, where $\Omega$ is a measurable subset of $\RR^d$. We denote $L_p(\Omega)$ as the Lebsegue space of order $p$ with respect to the Lebsegue measure on $\Omega$, and denote its norm as $\|\cdot\|_{L_p(\Omega)}$. Furthermore, we denote $L_p(\mu)$ as the Lebesgue space of order $p$ with respect to the measure $\mu$ on $\Omega$, and denote its norm as $\|\cdot\|_{L_p(\mu)}$. 

{
	Let $\mu$ be a positive Borel measure on $\Omega$ and $L_2(\mu)$ be the Hilbert space of the square integrable functions on $\Omega$ w.r.t.\ $\mu$. For a \emph{kernel} $K: \Omega \times \Omega \to \RR$, we denote $L_K^\mu$ as
	\begin{equation} \label{defintop}
		\left(L_K^\mu f\right)(x)= \int_{\Omega} K(x,t)f(t)d\mu(t).
	\end{equation}
	Moreover, if $\Omega$ is a compact subset of $\RR^d$ and the kernel $K$ qualifies as a \emph{Mercer kernel} (or \emph{reproducing kernel}), meaning that it is continuous, symmetric, and positive definite. In this case, the linear \textit{integral operator} $L_K^\mu: L_2(\mu) \to C(\Omega)$ is a compact, self-adjoint, positive operator. Consequently, the Spectral Theorem is applicable, indicating that there exists an orthonormal basis $\{\phi_1,\phi_2,\cdots\}$ of $L_2(\mu)$ consisting of the eigenfunctions of $L_K^\mu$. If $\lambda_k$ denotes the eigenvalue associated with $\phi_k$, the set $\{\lambda_k\}$ is either finite or approaches zero when $k \to \infty$. Moreover, Mercer's Theorem states that uniformly,
	\begin{equation}
		K(x,t)= \sum_{k=1}^\infty \lambda_k \phi_k (x) \phi_k (t).
	\end{equation} 
	Furthermore, if we denote
	\begin{equation}\label{defkerbound}
		C_K= \sup_{x,t \in \Omega} \lvert K(x,t)\rvert,
	\end{equation}
	then the operator norm is bounded as $\Vert L_K^\mu \Vert \leq \sqrt{\mu(\Omega)}C_K$.
	We note that in this paper, unless specified otherwise, the term ``kernel" refers specifically to a Mercer kernel.
}
\begin{definition} [Functional net with kernel embedding] \label{deffunctionalNN}
	{
		Let $f\in \mathcal{F}$ represent an input function. We start by performing a kernel embedding step on $f$ through an integral transformation, given by:
		\begin{equation} \label{kernelembedding}
			L_K f= \int_{\Omega} K(\cdot,t)f(t) dt,
		\end{equation}
		where the embedding kernel $K: \Omega \times \Omega \to \RR$ is a continuous kernel.
	}
	
	{
		Next, we define the functional deep neural network with a depth of $J$ and width $\{d_j\}_{j=1}^J$. This network is structured iteratively as follows:
		\begin{equation}
			h^{(j)}(f)= \begin{cases}
				T(L_K f), & j=1, \\
				\sigma\left(F^{(j)} h^{(j-1)}(f)+ b^{(j)}\right), & j=2,3,\dots,J.
			\end{cases}
		\end{equation}
		In this formulation, the first layer is identified as a projection step using the basis $\{\phi_i\}_{i=1}^{d_1}$:
		\begin{equation} \label{coeffvector}
			T(L_K f)
			=  \left[\int_{\Omega} L_K f(t) \phi_1(t) d\mu(t), \int_{\Omega} L_K f(t) \phi_2(t) d\mu(t), \cdots, \int_{\Omega} L_K f(t) \phi_{d_1}(t) d\mu(t)  \right]^T,
		\end{equation}
		where $\{\phi_i, \lambda_i\}$ represents the eigensystem of the integral operator $L_{K_0}^\mu$ induced by a projection Mercer kernel $K_0$ and a positive Borel measure $\mu$.
		The subsequent layers follow the standard deep neural network, where $\{F^{(j)} \in \RR^{d_j \times d_{j-1}}\}_{j=1}^J$ are the weight matrices, $\{b^{(j)} \in \RR^{d_j} \}_{j=1}^J$ are the bias vectors, and $\sigma(u)=\max \{u,0\}$ is the ReLU activation function.}
	
	{
		The final output of the functional net is derived as a linear combination of the last layer
		\begin{equation}
			F_{NN}(f)=c\cdot h^{(J)}(f),
		\end{equation}
		where $c\in \RR^{d_{J}}$ is the coefficient vector.
		Figure \ref{new structure} specifically depicts the architecture of the functional net with kernel embedding.}
	\begin{figure}[t]
		{
			\centering
			\includegraphics[width=0.8\textwidth]{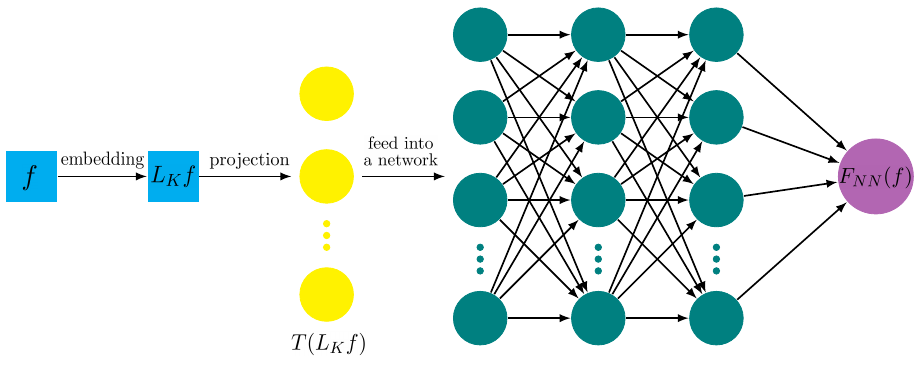}
			\caption{{Architecture of functional net with kernel embedding.}}
			\label{new structure}
		}
	\end{figure}
\end{definition}

{
	Below, we provide two examples regarding the selection of embedding kernels and projection kernels in our functional network. In the first example, we choose the embedding kernel $K$ to be the product of a Gaussian kernel with a density function. In the second example, the embedding kernel $K$ is selected as a linear combination of Gaussian kernels. Let $\mathcal{H}_\gamma$ denote the RKHS corresponding to the Gaussian kernel $k_\gamma$:
	\begin{equation}
		k_\gamma(x,t)=\exp\left(-\frac{\|x-t\|_2^2}{\gamma^2}\right), \qquad x,t\in \RR^d,
	\end{equation}
	where $\gamma > 0$ represents the bandwidth, and we denote its norm as $\|\cdot\|_{H_\gamma}$.
}
\begin{example}
	If we choose the embedding kernel $K$ as $K(x,t)= k_\gamma(x,t) u(t)$ where $u$ is the density of a positive Borel measure $\mu$, notice that $L_K f \in \mathcal{H}_\gamma(\Omega)$ for any $f\in L_2(\Omega)$, we can set the projection kernel $K_0$ to be $k_\gamma$.
\end{example}

\begin{example} \label{example2}
	If we choose the embedding kernel $K$ to be a linear combination of a collection of Gaussian kernels with varying bandwidths $\gamma_i$:
	\begin{equation}
		K(x,t)=\sum_{i=1}^P \beta_i k_{\gamma_i}(x,t).
	\end{equation}
	Denote $\gamma=  \min_i \gamma_i$. Notice that $L_K f \in \mathcal{H}_\gamma(\Omega)$ for any $f\in L_2(\Omega)$, we can set the projection kernel $K_0$ to be $k_\gamma$.
\end{example}

{
	The architecture described in Definition \ref{deffunctionalNN} serves as a theoretical framework. Next, we will outline how this framework can be practically implemented. The two critical steps for empirical implementation are the kernel embedding step specified in (\ref{kernelembedding}) and the projection step described in (\ref{coeffvector}).
}

{
	For the kernel embedding step defined in  \eqref{kernelembedding}, we can assume, without loss of generality, that the domain of the input functions is $\Omega=[0,1]^d$. This allows us to express the embedding step as follows:
	\begin{equation} \label{kernelquadrature}
		L_K f_i= \int_\Omega K(\cdot, t) f_i(t) d \mathcal{U}(t),
	\end{equation}
	where $d\mathcal{U}$ represents the uniform distribution on $\Omega$. The quadrature problem focuses on approximating this integral using linear combinations:
	\begin{equation} \label{quadraturescheme}
		\widehat{L_K} f_i =  \sum_{j=1}^{n_i} \theta_{i,j} K(\cdot,t_{i,j}),
	\end{equation}   
	where the points $t_{i,j} \in \Omega$ and the weights $\theta_{i,j}$ are chosen appropriately to minimize the approximation error as much as possible.
}

\begin{remark}
	{
		The standard Monte-Carlo method is to consider observation points $\{t_{i,j}\}_{j=1}^{n_i}$ i.i.d. sampled from $d\mathcal U$ and the weights $\theta_{i,j}= f_i(t_{i,j})/n_i$, which results in a decrease of the error in the rate of $1/\sqrt{n_i}$. Alternative sampling methods, such as Quasi Monte-Carlo methods and Trapezoidal rules, can lead to improved convergence rates.
	}
\end{remark}


\begin{algorithm} [t]
	\caption{{Training functional network}} 
	\label{algorithm}
	{
		{\bf Input:}
		second-stage data $\widehat{D}=\{\{t_{i,j},f_i(t_{i,j})\}_{j=1}^{n_i},y_i\}_{i=1}^m$, a 3D grid of hyperparameters: Gaussian kernel bandwidth $\gamma > 0$, scaling parameter $\beta>0$ of the Gaussian distribution $\mu$, reduced dimension $d_1$.  \\
		{\bf Output:}
		the learned functional net model $F_{\widehat{D}}$, and the corresponding test-set MSE.
		\begin{algorithmic}
			\STATE \textbf{Step 1:} For the particular hyperparameters $\gamma, \beta, d_1$, scale the discretization points $\{t_{i,j}\}_{j=1}^{n_i}$ to the domain $\Omega=[0,1]^d$, convert the input data $\{t_{i,j},f_i(t_{i,j})\}_{j=1}^{n_i}$ from $\widehat{D}$ into a $d_1$ dimensional vector $x_i \in \RR^{d_1}$ for $i=1, \dots, m$, using the following calculation: \\
			\begin{equation*}
				(x_i)_\ell = 	\widehat{T} (\widehat{L_K} f_i)_\ell= \sum_{k=1}^{N} w_k \sum_{j=1}^{n_i} \theta_{i,j} K(s_k,t_{i,j}) \phi_\ell(s_k), \quad \ell=1,\dots,d_1.
			\end{equation*}
			\STATE \textbf{Step 2:} Split the pre-processed data $\{x_i,y_i\}_{i=1}^m$ to the training, validation, and test sets.
			\STATE \textbf{Step 3:} Use a training set to train a functional net model denoted as $F_{\gamma,\beta,d_1}$.
			\STATE \textbf{Step 4:} Employ cross-validation to determine the optimal model $F_{\widehat{D}}$ across the 3D hyperparameter grid based on the validation set, and compute the mean squared error (MSE) on the test set using $F_{\widehat{D}}$.
		\end{algorithmic}
	}
\end{algorithm}

{
	For the projection step defined in \eqref{coeffvector}, we apply numerical integration over the discrete grid points $\{s_k\}_{k=1}^{N}$ within $\Omega$, using weights $\{w_k\}_{k=1}^N$. This results in the following equation:
	\begin{equation} \label{numericalke}
		\widehat{T} (\widehat{L_K} f_i)_\ell= \sum_{k=1}^{N} w_k \sum_{j=1}^{n_i} \theta_{i,j} K(s_k,t_{i,j}) \phi_\ell(s_k), \quad \ell=1,\dots,d_1.
	\end{equation}
	When the grid points are chosen to be sufficiently dense with a large value of $N$, this numerical integration can yield an accurate approximation of the integral.
}

{
	Finally, we provide Algorithm \ref{algorithm} as a comprehensive description of how to train our functional network in practice. This algorithm is precisely what we used for the training in our numerical simulations. In this algorithm, we select the embedding kernel $K$ to be the Gaussian kernel $k_\gamma$ with bandwidth $\gamma > 0$. The basis functions $\{\phi_\ell\}_{\ell=1}^{d_1}$ used in the projection step are chosen as the eigenfunctions of the integral operator $L_K^\mu$, which are explicitly represented in terms of Hermite polynomials, as discussed in \cite[pp. 28]{Fasshauer2011}. Additionally, the measure $d\mu$ is selected to be the Gaussian distribution, characterized by the density function $u(x)= \frac{\beta}{\sqrt{\pi}} \exp^{-\beta^2 x^2}$, where $\beta$ serves as a global scaling parameter.
}

\section{Approximation Results} \label{section2}
In this section, we state the main approximation results of our proposed functional net with kernel embedding. We begin by deriving rates of approximating nonlinear smooth functionals defined on Besov spaces. The utilization of data-dependent kernels allows us to attain improved approximation rates, particularly when the input function space is smaller, such as in the case of Gaussian RKHSs and mixed-smooth Sobolev spaces. These findings highlight the flexibility of our functional network in exploiting the regularity properties of the input functions.

\subsection{Rates of approximating nonlinear smooth functionals on Besov spaces}
\label{section21}
Our primary finding focuses on the rates of approximating nonlinear smooth functionals in Besov spaces $B_{2,\infty}^\alpha(\Omega)$ with $\alpha > 0$ using our functional net. Besov spaces provide a more nuanced measure of smoothness compared to Sobolev spaces $W_2^\alpha(\Omega)$, which consists of functions whose weak derivatives up to order $\alpha$ exist and belong to $L_2(\Omega)$. For more information on Sobolev and Besov spaces, refer to \cite[Chapter 2]{devore1993constructive}.

Specifically, for $r\in \NN$, the $r$-th difference $\Delta_h^r(f,\cdot): \Omega \to \RR$ for a function $f\in L_2(\Omega)$ and $h=(h_1,\dots,h_d)\in [0,\infty)^d$ is defined as
\begin{equation}
	\Delta_h^r(f,x)= 
	\begin{cases}
		\sum_{j=0}^r  \binom{r}{j}  (-1)^{r-j}f(x+jh), & \hbox{if} \ x\in X_{r,h}, \\
		0 & \hbox{if} \ x\notin X_{r,h},
	\end{cases}
\end{equation}
where $X_{r,h}:= \{x\in \Omega: x+sh \in \Omega, \forall s\in [0,r]\}$. To measure the smoothness of functions, the $r$-th modulus of smoothness for  $f\in L_2(\Omega)$ is defined as
\begin{equation}
	\omega_{r,L_2(\Omega)}(f,t)= \sup_{\|h\|_2 \leq t} \left\| \Delta_h^r(f,\cdot)\right\|_{L_2(\Omega)}, \qquad \ t\geq 0.
\end{equation}
Let $r=\lfloor\alpha\rfloor+1$, where $\lfloor \alpha\rfloor$ is the greatest integer that is smaller than or equal to $\alpha$. Then the Besov space $B_{2,\infty}^\alpha(\Omega)$ is defined as
\begin{equation}
	B_{2,\infty}^\alpha(\Omega)= \left\{f\in L_2(\Omega): |f|_{B_{2,\infty}^\alpha(\Omega)}< \infty \right\},
\end{equation}
where $|f|_{B_{2,\infty}^\alpha(\Omega)}:= \sup_{t>0} (t^{-\alpha} \omega_{r,L_2(\Omega)}(f,t))$ is the semi-norm of $B_{2,\infty}^\alpha(\Omega)$, and the norm of $B_{2,\infty}^\alpha(\Omega)$ is defined as $\|f\|_{B_{2,\infty}^\alpha(\Omega)}= \|f\|_{L_2(\Omega)} + |f|_{B_{2,\infty}^\alpha(\Omega)}$.

Suppose that the target functional $F: \mathcal{F} \to \RR$ is continuous with the modulus of continuity defined as
\begin{equation} 
	\omega_F(r)= \sup\{|F(f_1)-F(f_2)|:  f_1,f_2 \in \mathcal{F}, \|f_1-f_2\|_{L_2(\mu)} \leq r\},
\end{equation} 
satisfying the condition that
\begin{equation} \label{functionalsmoothness}
	\omega_F(r)\leq C_F r^\lambda, \qquad \hbox{for some} \ \lambda\in (0,1],
\end{equation}
where the measure $\mu$ is a positive Borel measure, such as the Lebesgue measure or Gaussian measures restricted on $\Omega$, and $C_F$ is a positive constant.

Let us consider the input function domain as $\Omega=\RR^d$. Suppose that the input function space $\mathcal{F}$ is a compact subset of $L_\infty(\RR^d) \cap B_{2,\infty}^\alpha(\RR^d)$, where for any function $f\in \mathcal{F}$, its norm satisfies $\|f\|_{B_{2,\infty}^\alpha(\RR^d) }\leq 1$. In the following theorem, we demonstrate that our functional net can effectively approximate nonlinear smooth target functionals defined on Besov spaces. This result is obtained by selecting the embedding kernel $K$ as a linear combination of Gaussian kernels, as demonstrated in Example \ref{example2}. In particular, we have:
\begin{equation} \label{embedkernel1}
	K(x,t)= \sum_{j=1}^r \binom{r}{j} (-1)^{1-j} \frac{1}{j^d} \left(\frac{1}{\gamma^2 \pi} \right)^{\frac{d}{2}} k_{j\gamma} (x,t),
\end{equation}
where $r=\lfloor\alpha\rfloor+1$ is determined by the smoothness of the input function. As shown in Example \ref{example2}, $L_K f$ belongs to $H_\gamma(\RR^d)$. Therefore, we can choose the projection kernel $K_0$ as $k_\gamma$ in the projection step, and select the projection basis as the first $d_1$ eigenfunctions of the integral operator $L^\mu_{K_0}$. {We note that in this theorem, although the domain $\Omega= \RR^d$ is not compact, the selection of the Gaussian measure $\mu$ in $L_K^\mu$ guarantees that it remains a compact operator. Consequently, this allows the application of the Spectral Theorem, as noted in \citep[pp. 28]{Fasshauer2011}.}

\begin{theorem} \label{approxrate1}
	Let $\alpha>0$, $d,M \in \NN$. Assume that the input function space $\mathcal{F}$ is a compact subset of $L_\infty(\RR^d) \cap B_{2,\infty}^\alpha(\RR^d)$, with the condition that $\|f\|_{B_{2,\infty}^\alpha(\RR^d) }\leq 1$ for any function $f\in \mathcal{F}$. Additionally, suppose that the modulus of continuity of the target functional $F: \mathcal{F} \to \mathbb{R}$ adheres to condition (\ref{functionalsmoothness}) with $\lambda \in (0,1]$. By selecting the embedding kernel $K$ as defined in \eqref{embedkernel1}, the projection kernel as $K_0=k_\gamma$, and
	\begin{equation}
		d_1 = \tilde c_1 \frac{\log M}{\log \log M}, \quad \gamma = \tilde c_2 \left(\frac{\log \log M}{\log M}\right)^{\frac{1}{d}} \log\log M,
	\end{equation}
	there exists a functional network $F_{NN}$ that follows the architecture specified in Definition \ref{deffunctionalNN}, with $M$ nonzero parameters and the depth
	\begin{equation}
		J \leq \tilde c_3 \left( \frac{\log M}{\log\log M}\right)^2,
	\end{equation}
	such that
	\begin{equation}
		\sup_{f\in \mathcal{F}} |F(f)-F_{NN}(f)| \leq  \tilde c_4 \left(\log M\right)^{-\frac{\alpha\lambda}{d}}  \left(\log\log M\right)^{\left(\frac{1}{d}+1\right) \alpha\lambda},
	\end{equation}
	where $\tilde{c}_1, \tilde{c}_2, \tilde c_3, \tilde c_4$ are positive constants.
\end{theorem}

Since $W_2^\alpha(\mathbb{R}^d) \subset B_{2,\infty}^\alpha(\mathbb{R}^d)$, our findings are applicable to approximate nonlinear smooth functionals defined on Sobolev spaces as well. The leading term in the approximation rates aligns with previous studies \citep{mhaskar1997neural, song2022, songapproximation2023}, although the $\log \log M$ term exhibits slight variations. The polynomial rates in terms of $\log M$, as opposed to $M$ in traditional function approximation, arise from the curse of dimensionality, given that the input function space is infinite-dimensional. To address this curse of dimensionality, it is necessary to impose stronger smoothness conditions on the target functional. 

It is important to note that Theorem \ref{approxrate1} is specifically applicable to input function spaces defined as Besov spaces on $\mathbb{R}^d$, rather than on any subset of it. In the following, we will explore how to extend the results in Theorem \ref{approxrate1} to Sobolev spaces defined on a domain $\Omega \subset \mathbb{R}^d$.  Let us denote $H_0^\alpha(\Omega)$ as the closure of infinitely differentiable compactly supported functions $C_c^\infty(\Omega)$ within the space $W_2^\alpha(\Omega)$. For any function $f \in H_0^\alpha(\Omega)$, we can consider its extension to $\mathbb{R}^d$ by defining $\tilde{f} \in L_2(\mathbb{R}^d)$ as follows:
\begin{equation}
	\tilde{f}(x)= \begin{cases}
		f(x) & x\in \Omega, \\
		0 & \hbox{otherwise}.
	\end{cases}
\end{equation}
With this construction, it follows that $\tilde{f} \in W_2^\alpha(\mathbb{R}^d)$, and we have the equality for the norms: $\|\tilde f\|_{W_2^\alpha(\RR^d)}= \|f\|_{W_2^\alpha(\Omega)}$ \citep[Lemma 3.22]{adams2003sobolev}. This enables us to obtain the subsequent result.

\begin{theorem} \label{approxrate12}
	Let $\alpha, d,M \in \NN$, $\Omega=[0,1]^d$. Assume that the input function space $\mathcal{F}$ is a compact subset of $H_0^\alpha(\Omega)$, with the condition that  $\|f\|_{W_{2}^\alpha(\Omega) }\leq 1$ for any function $f\in \mathcal{F}$. Additionally, suppose that the modulus of continuity of the target functional $F: \mathcal{F} \to \RR$ adheres to condition (\ref{functionalsmoothness}) with $\lambda\in (0,1]$. By selecting the embedding kernel $K$ as defined in \eqref{embedkernel1}, the projection kernel as $K_0=k_\gamma$, and
	\begin{equation} \label{hyper1}
		d_1 = \tilde c_1 \frac{\log M}{\log \log M}, \quad \gamma = \tilde c_2 \left(\frac{\log \log M}{\log M}\right)^{\frac{1}{d}} \log\log M,
	\end{equation}
	there exists a functional network $F_{NN}$ that follows the architecture specified in Definition \ref{deffunctionalNN}, with $M$ nonzero parameters and the depth
	\begin{equation}
		J \leq \tilde c_3 \left( \frac{\log M}{\log\log M}\right)^2,
	\end{equation}
	such that
	\begin{equation}
		\sup_{f\in \mathcal{F}} |F(f)-F_{NN}(f)| \leq  \tilde c_4 \left(\log M\right)^{-\frac{\alpha\lambda}{d}}  \left(\log\log M\right)^{\left(\frac{1}{d}+1\right) \alpha\lambda},
	\end{equation}
	where $\tilde c_1, \tilde c_2, \tilde c_3, \tilde c_4$ are positive constants.
\end{theorem}

It would be intriguing to explore additional, more general scenarios in which the approximation results presented in Theorem \ref{approxrate1} can be extended to Besov spaces defined on domains $\Omega \subset \mathbb{R}^d$. Such investigations could enrich our understanding of functional approximation in a broader context and potentially lead to new insights and results applicable to a wider range of applications.

\subsection{Rates of approximating nonlinear smooth functionals on Gaussian RKHSs}
\label{section22}

We then investigate scenarios where the input function spaces are more limited. For example, when the input function spaces are defined as RKHSs induced by specific Mercer kernels, our functional net can still attain satisfactory approximation rates. This is made possible by the flexible choice of the embedding kernel $K$.

Let us consider $\Omega = [0, 1]^d$ and assume that the input function space $\mathcal{F}$ is a compact subset of the unit ball of the Gaussian RKHS $H_\gamma(\Omega)$, which is indeed a subset of the Besov space $B_{2,\infty}^\alpha(\Omega)$ \citep{steinwart2008support}. Furthermore, we assume that the target functional $F: \mathcal{F} \to \mathbb{R}$ meets the same modulus of continuity condition as in (\ref{functionalsmoothness}). Our second main result demonstrates rates of approximating nonlinear smooth functionals on Gaussian RKHSs using our functional net. In this case, the projection kernel $K_0$ is chosen to be the Gaussian kernel $k_\gamma$, and the bases used in the projection step consist of the first $d_1$ eigenfunctions of the integral operator $L^\mu_{K_0}$. The embedding kernel $K$ is chosen as
\begin{equation} \label{embeddingkernelg}
	K(x,t)= k_\gamma (x,t) u(t),
\end{equation}
where $u$ is the density of the measure $\mu$.

\begin{theorem} \label{approxrate2}
	Let $\gamma>0$, $d,M \in \NN$, $\Omega=[0,1]^d$. Assume that the input function space $\mathcal{F}$ is a compact subset of the unit ball of Gaussian RKHS $H_\gamma (\Omega)$, and the modulus of continuity of the target functional $F: \mathcal{F} \to \RR$ satisfies the condition (\ref{functionalsmoothness}) with $\lambda\in (0,1]$. By selecting the embedding kernel $K$ as defined in \eqref{embeddingkernelg}, the projection kernel $K_0$ as $k_\gamma$, and
	\begin{equation} \label{hyper2}
		d_1 = c_6 \left(\log M \right)^{\frac{d}{d+1}},
	\end{equation}
	there exists a functional network $F_{NN}$ that follows the architecture specified in Definition \ref{deffunctionalNN}, with $M$ nonzero parameters and the depth
	\begin{equation}
		J \leq \tilde c_5 \left( \log M\right)^{\frac{2d}{d+1}},
	\end{equation}
	such that
	\begin{equation}
		\sup_{f\in \mathcal{F}} |F(f)-F_{NN}(f)| \leq \tilde c_6 e^{-c_7 \lambda \left(\log M \right)^{\frac{1}{d+1}}} \left(\log M\right)^{\frac{d}{d+1}},
	\end{equation}
	where $\tilde c_5, \tilde{c}_6, c_6, c_7$ are positive constants, with $c_7> \left(\frac{d}{3e}\right)^{\frac{d}{d+1}}$.
\end{theorem}

It is essential to highlight that although the proof technique employed for this result may seem more straightforward than that used in Theorem \ref{approxrate1}, it cannot be directly applied to prove Theorem \ref{approxrate1}. This restriction occurs because this proof technique is applicable only when Sobolev spaces $W_2^\alpha(\Omega)$ are RKHSs, which is true if and only if $\alpha > \frac{d}{2}$ \citep[Theorem 121]{berlinet2011reproducing}.

In this context, the dominant term of the approximation rate is given by $e^{-c_7 \lambda \left(\log M \right)^{\frac{1}{d+1}}}$, where  $c_7> \left(\frac{d}{3e}\right)^{\frac{d}{d+1}}$, along with an additional $\log M$ factor. This rate surpasses the $(\log M)^{-a}$ bound for any polynomial rate of $\log M$ with $a > 0$, which characterizes the situation in Theorem \ref{approxrate1}. However, it is inferior to the $M^{-a}$ rate for any polynomial rate of $M$ with $a > 0$ in the asymptotic sense when $M \to \infty$. This suggests that even when the input function spaces are infinitely differentiable, we still cannot achieve polynomial approximation rates due to the curse of dimensionality.

\subsection{Rates of approximating nonlinear smooth functionals on mixed smooth Sobolev spaces}
\label{section23}

It is important to note that although the approximation rates for target functionals defined on Gaussian RKHSs in Section \ref{section22} show significant improvement compared to those on Besov spaces in Section \ref{section21}, the dominant terms of these rates still depend on the dimension $d$ of the input function spaces. Our third main result demonstrates that our functional net can achieve approximation rates that are independent of $d$ when the input function spaces possess certain special properties, specifically in the case of mixed smooth Sobolev spaces.

Let $\partial^{(k)}$ denote the $k$-th derivative for a multi-index $k \in \NN_0^d$. For an integer $\alpha > 0$, the mixed smooth Sobolev spaces $H_{mix}^\alpha(\Omega)$, which consist of functions with square-integrable partial derivatives with all individual orders less than $\alpha$, are defined as:
\begin{equation}
	H_{mix}^\alpha(\Omega)= \left\{ f\in L_2(\Omega): \partial^{(k)} f \in L_2(\Omega), \forall  \| k\|_\infty \leq \alpha \right\},
\end{equation}
where $\|k\|_\infty= \max_{1\leq i\leq d} k_i$. The norm on this space is given by:
\begin{equation}
	\Vert f\Vert_{H_{mix}^\alpha(\Omega)}= \left( \sum_{\|k\|_\infty \leq \alpha} \Vert  \partial^{(k)} f \Vert_{L_2(\Omega)}^2 \right)^{\frac{1}{2}}.
\end{equation}

The mixed smooth Sobolev space can be understood as a tensor product of univariate Sobolev spaces. Specifically, if we denote
\begin{equation}
	k_\nu(x,y;\ell)=\frac{2^{1-\nu}}{\Gamma(\nu)} \left( \sqrt{2\nu} \ell |x-y| \right)^\nu K_\nu\left(\sqrt{2\nu} \ell |x-y| \right)
\end{equation}
as the Mat\'ern kernel that reproduces the univariate Sobolev space $H^\alpha([0,1])$ with $\alpha = \nu + \frac{1}{2}$ (where $\Gamma$ denotes the gamma function, $K_\nu$ is the modified Bessel function of the second kind, and $\ell$ is a positive constant), the eigenvalues of the integral operator associated with this kernel decay polynomially as $\lambda_k = O(k^{-2\alpha})$ \citep{wendland2004scattered}. Consequently, the mixed smooth Sobolev space $H_{mix}^\alpha([0,1]^d)$ can be characterized as an RKHS induced by $K_\alpha$, which is the pointwise product of the individual Mat\'ern kernels, specifically:
$$K_\alpha(x,y) = \prod_{j=1}^{d} k_\nu(x_j,y_j), \quad \text{for } x,y \in [0,1]^d.$$

Assume that the input function space $\mathcal{F}$ is a compact subset of the unit ball of $H_{mix}^\alpha(\Omega)$, and that the target functional $F: \mathcal{F} \to \mathbb{R}$ adheres to the same modulus of continuity condition as in (\ref{functionalsmoothness}).  Our third main result demonstrates the rates of approximating this target functional by our functional net, where we select the projection kernel $K_0$ as $K_\alpha$, and the projection bases are chosen as the first $d_1$ eigenfunctions of the integral operator $L^\mu_{K_0}$. The embedding kernel $K$ is chosen as
\begin{equation} \label{embeddingkernelm}
	K(x,t)= K_\alpha (x,t) u(t),
\end{equation}
where $u$ is the density of the measure $\mu$.

\begin{theorem} \label{approxrate3}
	Let $\alpha,d,M \in \NN$, $\Omega=[0,1]^d$. Assume that the input function space $\mathcal{F}$ is a compact subset of the unit ball of the mixed smooth Sobolev space $H_{mix}^\alpha(\Omega)$, and the modulus of continuity of the target functional $F: \mathcal{F} \to \RR$ adheres to the condition (\ref{functionalsmoothness}) with $\lambda\in (0,1]$.  By selecting the embedding kernel $K$ as defined in \eqref{embeddingkernelm}, the projection kernel $K_0$ as $K_\alpha$, and
	\begin{equation} \label{hyper3}
		d_1 = C_4 \frac{\log M}{\log\log M},
	\end{equation}
	there exists a functional network $F_{NN}$ that follows the architecture specified in Definition \ref{deffunctionalNN}, with $M$ nonzero parameters and the depth
	\begin{equation}
		J \leq C_5 \left( \frac{\log M}{\log\log M}\right)^2,
	\end{equation}
	such that
	\begin{equation}
		\sup_{f\in \mathcal{F}} |F(f)-F_{NN}(f)| \leq C_6 (\log M)^{-\alpha\lambda} (\log \log M)^{(d-2)\alpha\lambda},
	\end{equation}
	where $C_4, C_5,C_6$ are positive constants.
\end{theorem}

It is noteworthy that the dominant term $(\log M)^{-\alpha\lambda}$ in the approximation rates is independent of $d$, and it only manifests in the negligible $\log\log M$ term. As a result, the overall rate experiences only a slight degradation as $d$ increases. This finding illustrates that our functional network can exploit the regularity of the input functions when they possess mixed smooth properties, particularly by choosing the embedding kernel in a data-dependent manner. It would be intriguing to explore additional scenarios where our functional network can exploit other specific characteristics of the input functions.

\section{Generalization Analysis}
\label{section4}

In this section, we conduct the theoretical analysis of the generalization error for the ERM algorithm applied to learn nonlinear functionals, utilizing our functional net as outlined in Definition \ref{deffunctionalNN}. {Moreover, the notation $\widetilde{O}$ is used to indicate that we conceal the additional logarithmic factors within the conventional $O$ notation.}

\subsection{{Problem settings and notations}}
We begin by defining the functional regression problem following the classical learning theory framework \citep{cucker2002mathematical,Cucker2007}. We assume that the first-stage data $D=\{f_i,y_i\}_{i=1}^m$ are i.i.d. samples drawn from the true unknown Borel probability distribution $\rho$ on $\mathcal{Z}=\mathcal{F}\times \mathcal{Y}$. Here, $\mathcal{F}$ represents the input function space, which is a compact subset of $L_\infty(\Omega)\cap L_2(\Omega)$ with $\Omega=[0,1]^d$, and satisfies the condition that $\|f\|_{L_2(\Omega)} \leq 1$ for any $f\in \mathcal{F}$, while $\mathcal{Y}=[-L,L]$ denotes the output space bounded by some constant $L>0$. However, in the case of functional data, the input functions cannot be observed directly; instead, we only have access to observations at discrete points. Thus, what we actually possess in practice are the second-stage data $\widehat{D}=\{\{t_{i,j},f_i(t_{i,j})\}_{j=1}^{n_i},y_i\}_{i=1}^m$, where $\{n_i\}_{i=1}^m$ indicates the sample size for the second stage.

Following the classical learning theory framework, our objective is to learn the regression functional
\begin{equation} \label{regressionfunctional}
	F_\rho(f)=\int_{\mathcal{Y}} y d\rho(y \vert f),
\end{equation}
where $\rho(y \vert f)$ is the conditional distribution at $f$ induced by $\rho$. This regression functional is the one that minimizes the generalization error using least squares loss
\begin{equation}
	\mathcal{E}(F)= \int_{\mathcal{Z}} \left(F(f)-y\right)^2 d \rho.
\end{equation}
We denote $\rho_{\mathcal{F}}$ as the marginal distribution of $\rho$ on $\mathcal{F}$, and $\left(L_{\rho_{\mathcal{F}}}^2, \Vert \cdot \Vert_\rho \right)$ as the space of square integrable functionals w.r.t.\ $\rho_{\mathcal{F}}$.

The hypothesis space $\mathcal{H}_{d_1,M}$ we use for the ERM algorithm is defined as
\begin{equation} \label{defhypospace}
	\begin{aligned}
		\mathcal{H}_{d_1,M}=  \{ & H_{NN} \circ T: H_{NN} \hbox{ is a structured deep ReLU neural network with input } \\
		& \hbox{dimension $d_1$, depth $J= d_1^2+d_1+1$, $M$ non-zero parameters, and} \\
		& \hbox{whose output has the following formulation \eqref{HNNoutput}} \}.
	\end{aligned}
\end{equation}
It takes the embedded function $L_K f$ as input, and $T$ serves as a projection step with the bases chosen as $\{\phi_i\}_{i=1}^{d_1}$, which consist of the eigenfunctions of the integral operator induced by the projection kernel $K_0$:
\begin{equation*} 
	T(g)
	=  \left[\int_{\Omega} g(t) \phi_1(t) d\mu(t), \int_{\Omega} g(t) \phi_2(t) d\mu(t), \cdots, \int_{\Omega} g(t) \phi_{d_1}(t) d\mu(t) \right]^T.
\end{equation*}

The architecture of the deep ReLU neural network $H_{NN}$ is specifically designed to approximate H\"older continuous functions on $[-R,R]^{d_1}$, and it has the explicit formulation given by
\begin{equation} \label{HNNoutput}
	H_{NN}(y)=\sum\limits_{i=1}^{(N+1)^{d_1}}c_i\psi\left(\frac{N}{2R}(y-b_i)\right), \quad y\in [-R,R]^{d_1},
\end{equation}
where $c_i\in \RR, b_i\in\mathbb{R}^{d_1}$ satisfying $|c_i| \leq L$, $\|b_i\|_\infty \leq R$, and $\psi: \RR^d \to \RR$ is defined as
\begin{equation}
	\psi(y)=\sigma\Big(\min\big\{\min_{k\neq j}(1+y_k-y_j),\min_{k}(1+y_k),\min_{k}(1-y_k)\big\}\Big).
\end{equation} 
Moreover, by Lemma \ref{lipschitzappro}, $N$ has the following relationship with $M$:
\begin{equation} \label{MNrelation}
	\bar C_1 d_1^4 (N+1)^{d_1} \leq M \leq \bar C_2 d_1^4 (N+1)^{d_1},
\end{equation}
where $\bar C_1$ and $\bar C_2$ are positive constants. Notice that
\begin{equation*} 
	\begin{aligned}
		\|T(L_K f)\|_\infty &\leq \|L_K f\|_{L_2{(\mu)}} \leq \sqrt{\mu(\Omega)} \|L_K f\|_{L_\infty(\Omega)} \\
		&= \sqrt{\mu(\Omega)} \sup_{x\in \Omega} \int_{\Omega} K(x,t) f(t) dt  \leq \sqrt{\mu(\Omega)} C_K \|f\|_{L_2(\Omega)} \leq \sqrt{\mu(\Omega)} C_K.
	\end{aligned}
\end{equation*}
Let $C_\mu=\mu(\Omega)$, then we can set $R=\sqrt{C_\mu} C_K$.
Denote $\mathcal{H}_{NN}$ as the function space of $H_{NN}$. The following proposition demonstrates that the difference in the outputs of $H_{NN}$ for different inputs can be bounded by the difference in those inputs.

\begin{proposition}\label{raodong}
	For any $h \in \mathcal H_{NN}$ and $x,y\in\mathbb{R}^{d_1}$, we have
	\begin{equation}
		|h(x)-h(y)|\le  \frac{L}{\sqrt{C_\mu}C_K}N(N+1)^{d_1} (d_1^2+d_1)||x-y||_1.
	\end{equation}
\end{proposition}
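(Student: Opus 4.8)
The plan is to exploit the explicit form \eqref{HNNoutput} of $h = H_{NN}$ as a finite sum of $(N+1)^{d_1}$ scaled translates of the single bump function $\psi$, and to reduce everything to an $\ell_1$-Lipschitz estimate for $\psi$ itself. Writing $\Phi_i(z) = \psi\bigl(\frac{N}{2R}(z - b_i)\bigr)$, the triangle inequality gives $|h(x) - h(y)| \le \sum_{i=1}^{(N+1)^{d_1}} |c_i|\,|\Phi_i(x) - \Phi_i(y)|$, and since $|c_i| \le L$ and there are $(N+1)^{d_1}$ summands, the whole estimate follows once I control a single $|\Phi_i(x) - \Phi_i(y)|$. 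The inner map $z \mapsto \frac{N}{2R}(z-b_i)$ scales each coordinate by $\frac{N}{2R}$, so $\|\frac{N}{2R}(x-b_i) - \frac{N}{2R}(y-b_i)\|_1 = \frac{N}{2R}\|x-y\|_1$; hence everything hinges on the Lipschitz constant of $\psi$ with respect to $\|\cdot\|_1$.

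First I would establish that $\psi$ is Lipschitz in the $\ell_1$ norm. I would write $\psi = \sigma \circ g$, where $g$ is the minimum of the affine functions $1 + y_k - y_j$ (for $k\neq j$), $1 + y_k$, and $1 - y_k$ appearing in its definition. Each such affine function has gradient lying in $\{-1,0,1\}^{d_1}$, hence $\|\nabla\|_\infty = 1$, so by H\"older's inequality each is $1$-Lipschitz with respect to $\|\cdot\|_1$. Counting the pieces — $d_1(d_1-1)$ of the difference type, plus $d_1$ of each of the two remaining types — gives exactly $d_1^2 + d_1$ affine functions, and a Lipschitz bound for their minimum followed by the $1$-Lipschitz map $\sigma$ produces the factor $d_1^2 + d_1$. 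I would note in passing that the minimum of $1$-Lipschitz functions is itself $1$-Lipschitz, so $\psi$ is in fact $1$-Lipschitz in $\|\cdot\|_1$; the stated bound therefore holds with room to spare, the displayed constant $d_1^2+d_1$ arising from the cruder ``sum over the $d_1^2+d_1$ affine pieces'' estimate that can be read off directly from the network structure.

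Assembling the pieces, $|\Phi_i(x) - \Phi_i(y)| \le (d_1^2+d_1)\cdot \frac{N}{2R}\|x-y\|_1$, and summing over the $(N+1)^{d_1}$ terms with $|c_i|\le L$ yields $|h(x)-h(y)| \le \frac{L}{2R} N (N+1)^{d_1}(d_1^2+d_1)\|x-y\|_1$; substituting $R = C_K$ gives a bound with an extra factor $\frac12$, which is only smaller than the claimed one and hence establishes it. The only genuinely delicate point is the $\ell_1$-Lipschitz estimate for $\psi$: one must measure the gradients in $\|\cdot\|_\infty$ so that pairing with $\|\cdot\|_1$ through H\"older keeps each affine piece $1$-Lipschitz, and one must handle the nested minima and the outer ReLU correctly. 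Everything after that — the linear rescaling and the triangle inequality over the sum — is routine bookkeeping, and no restriction to $[-R,R]^{d_1}$ is needed since $\psi$, $\sigma$, and the affine pieces are defined and Lipschitz on all of $\RR^{d_1}$.
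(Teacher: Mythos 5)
Your proof is correct and follows essentially the same route as the paper's: the triangle inequality over the $(N+1)^{d_1}$ bump terms with $|c_i|\le L$, an $\ell_1$-Lipschitz estimate for $\psi$ coming from its structure as $\sigma$ applied to a minimum of $d_1^2+d_1$ affine pieces, and the coordinate rescaling by $\frac{N}{2R}$ with $R=C_K$. The only real difference is in how the minimum is handled: the paper invokes its Lemma~\ref{lemma_min}, whose recursive ReLU representation of $\min$ carries a factor of $2$ and exactly produces the stated constant, whereas your sharper observation that a minimum of $1$-Lipschitz functions is $1$-Lipschitz (or even your cruder sum over the affine pieces) lands you at $\frac{L}{2C_K}N(N+1)^{d_1}(d_1^2+d_1)\|x-y\|_1$, which is stronger than, and hence implies, the claimed bound.
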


Furthermore, the empirical generalization error using the first-stage data $D=\{f_i,y_i\}_{i=1}^m$ is defined as
\begin{equation}
	\mathcal{E}_{D}(F)= \frac{1}{m} \sum_{i=1}^m \left(F(f_i)-y_i \right)^2.
\end{equation}
If we consider employing the hypothesis space defined in \eqref{defhypospace} along with the kernel embedding step within the first-stage ERM algorithm, the first-stage empirical generalization error can be formulated as
\begin{equation}
	\mathcal{E}_{D}(H\circ L_K)= \frac{1}{m} \sum_{i=1}^m \left(H \circ L_K f_i-y_i \right)^2.
\end{equation}
Moreover, if we denote
\begin{equation} 
	H_{D}= \mathop{\arg} \mathop{\min}_{H\in \mathcal{H}_{d_1,M}} \mathcal{E}_{D}(H\circ L_K),
\end{equation}
then the first-stage empirical target functional has the form 
\begin{equation}
	F_{D}=H_{D}\circ L_K.
\end{equation}

However, as previously noted, in practice, we can only acquire the second-stage data $\widehat{D}=\{\{t_{i,j},f(t_{i,j})\}_{j=1}^{n_i},y_i\}_{i=1}^m$. Therefore, rather than using the kernel embedding $L_K f_i$, we must rely on a quadrature scheme as outlined in \eqref{quadraturescheme}:
\begin{equation*}
	\widehat{L_K} f_i= \sum_{j=1}^{n_i} \theta_{i,j} K(\cdot,t_{i,j}).
\end{equation*}
Furthermore, if we consider utilizing the hypothesis space defined in \eqref{defhypospace} within the second-stage ERM algorithm, the second-stage empirical generalization error can be expressed as
\begin{equation}
	\mathcal{E}_{\widehat{D}}(H\circ \widehat{L_K})= \frac{1}{m} \sum_{i=1}^m \left(H\circ \widehat{L_K}f_i - y_i \right)^2.
\end{equation}
Similarly, if we denote
\begin{equation} 
	H_{\widehat{D}}= \mathop{\arg} \mathop{\min}_{H\in \mathcal{H}_{d_1,M}} \mathcal{E}_{\widehat{D}}(H\circ \widehat{L_K}),
\end{equation}
then the second-stage empirical target functional has the form 
\begin{equation} \label{defemptarg}
	F_{\widehat{D}}=H_{\widehat{D}}\circ L_K.
\end{equation}

Finally, we define the projection operator $\pi_{L}$ on the functional space $F: \mathcal{F} \to \mathbb{R}$ as
$$
\pi_{L}(F)(f)= \begin{cases}L, & \text { if } F(f)>L, \\ -L, & \text { if } F(f)<-L, \\ F(f), & \text { if }-L \leq F(f) \leq L .\end{cases}
$$
Since the regression functional $F_{\rho}$ is bounded by $L$,  we will use the truncated empirical target functional $$\pi_{L} F_{\widehat{D}}$$ as the final estimator.

\subsection{{A two-stage oracle inequality}}

Given that the ERM algorithm operates on the second-stage data for the FDA, in contrast to traditional regression problems that only consider first-stage data, we must develop a new oracle inequality for the generalization analysis of the ERM algorithm applied to our functional network. The primary approach involves employing a two-stage error decomposition method, where the first-stage empirical error serves as an intermediary term in this decomposition of errors.

In the following, for the convenience, we denote $\mathcal{H}=\mathcal{H}_{d_1,M}$, and $u_i=L_Kf_i$, $\hat u_i=\widehat{L_K}f_i$, for $i=1,2,\dots,m$.
\begin{proposition}\label{errordec}
	Let $F_{\widehat D}$ be the empirical target functional defined in \eqref{defemptarg}, and $\pi_{L} F_{\widehat{D}}$ be the truncated empirical target functional, then
	\begin{equation}
		\begin{aligned}
			& \mathcal{E}\left(\pi_{L}F_{\widehat D} \right)- \mathcal{E}\left(F_\rho \right)\leq I_1+ I_2 + I_3.
		\end{aligned}
	\end{equation}
	where
	\begin{equation}
		\begin{aligned}
			& I_1 = \left\{\mathcal{E}\left(\pi_{L}F_{\widehat D} \right)- \mathcal{E}\left(F_\rho \right) \right\}- 2\left\{\mathcal{E}_{D}\left(\pi_{L}F_{\widehat D} \right)-\mathcal{E}_{D}\left(F_\rho \right) \right\}, \\
			& I_2 = 2\left\{\mathcal{E}_{D}\left(\pi_{L} F_{D} \right)-\mathcal{E}_D\left(F_\rho \right)\right\}, \\
			& I_3= 16L\sup_{H\in \mathcal{H}}\frac{1}{m}\sum_{i=1}^{m}\left|H(u_i)-H(\hat{u}_i)\right|.
		\end{aligned}
	\end{equation}
\end{proposition}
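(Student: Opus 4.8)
The plan is a telescoping (two-stage) error decomposition in which the first-stage empirical error $\mathcal{E}_D$ bridges the population error $\mathcal{E}$ and the second-stage empirical error $\mathcal{E}_{\widehat{D}}$ that $F_{\widehat D}$ actually minimizes. I would first split off $I_1$ by the trivial identity
\[ \mathcal{E}(\pi_L F_{\widehat D}) - \mathcal{E}(F_\rho) = I_1 + 2\{\mathcal{E}_D(\pi_L F_{\widehat D}) - \mathcal{E}_D(F_\rho)\}, \]
so that the whole proposition reduces to proving $2\{\mathcal{E}_D(\pi_L F_{\widehat D}) - \mathcal{E}_D(F_\rho)\} \le I_2 + I_3$.

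To handle this remainder I would first discard the truncation in the leading term: since each response obeys $|y_i| \le L$, clipping a prediction to $[-L,L]$ never increases a squared residual, so $\mathcal{E}_D(\pi_L F_{\widehat D}) \le \mathcal{E}_D(F_{\widehat D})$. Writing $F_{\widehat D} = H_{\widehat D} \circ L_K$ and $F_D = H_D \circ L_K$, I would then insert $\mathcal{E}_{\widehat D}$ and telescope:
\[ \mathcal{E}_D(F_{\widehat D}) - \mathcal{E}_D(F_\rho) = \big[\mathcal{E}_D(F_{\widehat D}) - \mathcal{E}_{\widehat D}(F_{\widehat D})\big] + \big[\mathcal{E}_{\widehat D}(F_{\widehat D}) - \mathcal{E}_{\widehat D}(F_D)\big] + \big[\mathcal{E}_{\widehat D}(F_D) - \mathcal{E}_D(F_D)\big] + \big[\mathcal{E}_D(F_D) - \mathcal{E}_D(F_\rho)\big]. \]
The second bracket is nonpositive because $H_{\widehat D}$ minimizes $\mathcal{E}_{\widehat D}$ over $\mathcal{H}$ while $H_D \in \mathcal{H}$. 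For the last bracket I would use that every $H \in \mathcal{H}_{NN}$ satisfies $\|H\|_\infty \le L$ (the shifted copies of $\psi$ form a partition of unity and $|c_i| \le L$), hence $\pi_L F_D = F_D$ and, after the outer factor $2$, this term is exactly $I_2$.

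The heart of the argument, and the step demanding the most care, is the first and third brackets, each comparing a fixed network evaluated at $u_i = L_K f_i$ against $\hat u_i = \widehat{L_K} f_i$. Factoring each residual difference through $a^2 - b^2 = (a-b)(a+b)$ and using $|H(u_i)|, |H(\hat u_i)|, |y_i| \le L$, I would bound, for any $H \in \mathcal{H}$,
\[ \big|\mathcal{E}_D(H\circ L_K) - \mathcal{E}_{\widehat D}(H\circ L_K)\big| \le \frac{1}{m}\sum_{i=1}^m |H(u_i) - H(\hat u_i)|\,\big|H(u_i) + H(\hat u_i) - 2y_i\big| \le 4L \sup_{H\in\mathcal{H}}\frac{1}{m}\sum_{i=1}^m |H(u_i) - H(\hat u_i)|. \]
Applying this estimate to $F_{\widehat D}$ and to $F_D$ yields $8L$ times the supremum, and the outer factor $2$ produces exactly $I_3$. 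The main delicacy is essentially the constant bookkeeping: one must recognize that bridging through $\mathcal{E}_{\widehat D}$ forces the $u_i \leftrightarrow \hat u_i$ swap to be paid twice, so that the two discrepancy terms, the doubling, and the crude bound $4L$ on $|H(u_i)+H(\hat u_i)-2y_i|$ combine to the stated $16L$, while taking care that the uniform bound $\|H\|_\infty \le L$ is invoked so that truncation is inactive on $F_D$.
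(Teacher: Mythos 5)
Your proof is correct and is essentially the paper's own argument: the same splitting off of $I_1$ as an identity, the same bridge between $\mathcal{E}_D$ and $\mathcal{E}_{\widehat D}$ via the minimality of $H_{\widehat D}$ over $\mathcal{H}$, and the same accounting by which the $u_i\leftrightarrow\hat u_i$ swap is paid twice at cost $4L$ each and then doubled to give the stated $16L$. The only real difference is where the truncation is handled: the paper carries $\pi_L$ through both discrepancy terms (so boundedness by $L$ is automatic) and strips it only at the very end via $|\pi_L H(u_i)-\pi_L H(\hat u_i)|\le |H(u_i)-H(\hat u_i)|$, whereas you discard it up front by invoking $\|H\|_\infty\le L$ for the structured class --- a fact that does hold (partition of unity of the shifted $\psi$'s together with $|c_i|\le L$) and that the paper itself implicitly needs for its ordering $\mathcal{E}_{\widehat D}(\pi_L F_D)\ge \mathcal{E}_{\widehat D}(F_{\widehat D})$, so there is no gap.
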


{Our next objective is to establish a two-stage oracle inequality for analyzing the generalization ability of the ERM algorithm applied to our functional network. This will be influenced by the capacity of the hypothesis space we employ, as well as the quadrature scheme $\widehat{L_K} f$ in \eqref{quadraturescheme}, which is utilized for approximating $L_K f$ as described in \eqref{kernelquadrature}}. 

{
	In this theoretical analysis, we consider the utilization of a theoretically optimal kernel quadrature scheme described in \cite[pp. 21--22]{bach2017equivalence}. To achieve an approximation accuracy $\epsilon$, the quadrature formula is given by:
	\begin{equation} \label{optimalquadrature}
		\widehat{L_K} f_i= \sum_{j=1}^{n_i} \theta_{i,j} K(\cdot,t_{i,j}),
	\end{equation}
	where the observation points $\{t_{i,j}\}_{j=1}^{n_i}$ are i.i.d. sampled from an optimal distribution on $\Omega$ with density $\tau$ w.r.t.\ $d \mathcal{U}$ satisfying
	\begin{equation} \label{optimaldistribution}
		\tau(x) \propto \sum_{k\geq 1} \frac{\lambda_k}{\lambda_k+ \epsilon} \phi_k(x)^2,
	\end{equation}
	and the weights $\{\theta_{i,j}\}_{j=1}^{n_i}$ are computed by minimizing 
	$$	\sum_{j=1}^{n_i} \sum_{k=1}^{n_i} \theta_{i,j} \theta_{i,k} K(t_{i,j},t_{i,k}) - 2 \sum_{j=1}^{n_i} \theta_{i,j} \int_\Omega K(t,t_{i,j}) f_i(t) d \mathcal{U}(t), $$
	subject to $\sum_{j=1}^{n_i} \theta_{i,j}^2 \leq 4/n$ (\cite[pp. 21--22]{bach2017equivalence}, \cite[pp. 3--4]{kanagawa2016convergence}). 
}

We are now prepared to present the two-stage oracle inequality for the ERM algorithm when the hypothesis space is defined by our functional network. In this context, the embedding kernel $K$ and the projection kernel $K_0$ are selected as specified in Section \ref{section2}.
\begin{theorem}\label{oracleinequality}
	Let $m,n \in\mathbb{N}$, $\Omega=[0,1]^d$. Assume that the second-stage sample sizes are equal, specifically $n_1=n_2= \cdots =n_m=n$, the bound of the output space $L\ge 1$, the input function $f \in L_\infty(\Omega)\cap L_2(\Omega)$ and satisfies the norm constraint $\|f\|_{L_2(\Omega)} \leq 1$. Let $F_{\widehat D}$ denote the empirical target functional defined in \eqref{defemptarg}, $F_\rho$ represent the regression functional defined in \eqref{regressionfunctional}, then we have the following inequality
	\begin{equation}
		\mathop{E}||\pi_LF_{\widehat D}-F_{\rho}||^2_{\rho}\le \frac{c'_1 JM\log M\log m}{m} + \frac{c'_3}{C_K} M^{1+\frac{1}{d_1}} \sqrt{\lambda_q} + 2\inf_{F\in \{H\circ L_K: H\in \mathcal{H}\}} \|F-F_{\rho}\|_{\rho}^2,
	\end{equation}
	where the expectation is taken with respect to the first-stage data $D$ and the second-stage data $\widehat{D}$, $\{\lambda_i\}$ are the eigenvalues of the integral operator $L^\mu_{K_0}$ associated with the projection kernel $K_0$. Moreover, $q= \frac{c'_5 n}{\log n}$,  $J=d_1^2+d_1+1$, $C_K= \sup_{x,t} |K(x,t)|$ depending on the embedding kernel $K$, and $c'_1,c'_3,c'_5 $ are positive constants.
\end{theorem}


\subsection{{Generalization error bounds}}

Building on the two-stage oracle inequality outlined in Theorem \ref{oracleinequality}, we are now prepared to establish the generalization error bounds for the ERM algorithm applied to our functional network. We first examine the generalization analysis scenario where the input function space is Sobolev space, by leveraging the approximation results presented in Section \ref{section21}. For the dimension reduction step in this context, the embedding kernel $K$ is selected as defined in \eqref{embedkernel1}, the projection kernel is selected as $K_0=k_\gamma$, and the hyperparameters $d_1, \gamma$ are selected according to \eqref{hyper1} within Theorem \ref{approxrate12}.
\begin{theorem} \label{generalizationerror1}
	Let $\alpha, m,n \in\mathbb{N}$, $\Omega=[0,1]^d$. Assume that the input function space $\mathcal{F}$ is a compact subset of $H_0^\alpha(\Omega)$, with the condition that $\|f\|_{W_{2}^\alpha(\Omega) }\leq 1$ for any function $f\in \mathcal{F}$, and the modulus of continuity of the regression functional $F_\rho: \mathcal{F} \to \RR$ satisfies the condition (\ref{functionalsmoothness}) with $\lambda\in (0,1]$. Additionally, suppose that the second-stage sample sizes are equal, specifically $n_1=n_2= \cdots =n_m=n$.  By choosing the number of nonzero parameters $M$ in the functional network and the second-stage sample size $n$ as
	\begin{equation}
		M = \left \lfloor \frac{m}{(\log m)^{\frac{2\alpha\lambda}{d}+4}} \right \rfloor, \qquad  	n \geq \hat C_2 (\log m)^d \log \log m,
	\end{equation}
	we obtain the following generalization error bound
	\begin{equation}
		\mathop{E}||\pi_LF_{\widehat D}-F_{\rho}||^2_{\rho}\leq \tilde{C}_2 \left(\log m\right)^{-\frac{2\alpha\lambda}{d}}  \left(\log\log m\right)^{2\left(\frac{1}{d}+1\right) \alpha\lambda} ,
	\end{equation}
	where $\tilde{C}_2, \hat{C}_2$ are positive constants.
\end{theorem}

Since $\log\log m$ is negligible compared to $\log m$, the generalization error bounds we obtain for the Sobolev input function spaces converge at rates of $\widetilde{O}((\log m)^{-\frac{2\alpha\lambda}{d}})$. Similarly, we can derive the learning rates for input function spaces that are Gaussian RKHSs and mixed smooth Sobolev spaces by applying the approximation results from Section \ref{section22} and Section \ref{section23}, respectively. The proofs of the following two theorems are omitted, as they follow the same approach as in the proof of Theorem \ref{generalizationerror1}. In Theorem \ref{generalizationerror2}, we select the embedding kernel $K$ as defined in \eqref{embeddingkernelg} and the projection kernel $K_0$ as the Gaussian kernel $ k_\gamma$, leading to an eigenvalue decay of $\lambda_q = O(e^{-2c_2 q^{\frac{1}{d}}})$, and the hyperparameter $d_1$ is selected according to \eqref{hyper2} within Theorem \ref{approxrate2}. In Theorem \ref{generalizationerror3}, we select the embedding kernel $K$ as defined in \eqref{embeddingkernelm} and the projection kernel $K_0$ as the Mat\'ern  kernel $ K_\alpha$, resulting in an eigenvalue decay of $\lambda_q = O(q^{-2\alpha})$, and the hyperparameter $d_1$ is selected according to \eqref{hyper3} within Theorem \ref{approxrate3}.

\begin{theorem} \label{generalizationerror2}
	Let $\gamma>0, m,n \in\mathbb{N}$, $\Omega=[0,1]^d$. Assume that the input function space $\mathcal{F}$ is a compact subset of  the unit ball of the Gaussian RKHS $H_\gamma(\Omega)$, and the modulus of continuity of the regression functional $F_\rho: \mathcal{F} \to \RR$ satisfies the condition (\ref{functionalsmoothness}) with $\lambda\in (0,1]$. Additionally, suppose that the second-stage sample sizes are equal, specifically $n_1=n_2= \cdots =n_m=n$.  By choosing the number of nonzero parameters $M$ and second-stage sample size $n$ as
	\begin{equation}
		M = \left\lfloor \frac{m e^{-2c_7 \lambda \left(\log m \right)^{\frac{1}{d+1}}}}{(\log m)^{4}} \right\rfloor, \qquad 
		n \geq \hat C_3 (\log m)^d \log \log m,
	\end{equation}
	we obtain the following generalization error bound
	\begin{equation}
		\mathop{E}||\pi_LF_{\widehat D}-F_{\rho}||^2_{\rho}\leq \tilde{C}_3 e^{-2c_7 \lambda \left(\log m \right)^{\frac{1}{d+1}}} \left(\log m \right)^{\frac{2d}{d+1}} ,
	\end{equation}
	where $\tilde{C}_3$, $\hat C_3$,  $c_7$ are positive constants with $c_7> \left(\frac{d}{3e}\right)^{\frac{d}{d+1}}$.
\end{theorem}

Since $(\log m)^{\frac{2d}{d+1}}$ is negligible compared to $e^{-2c_7 \lambda(\log m)^{\frac{1}{d+1}}}$, the generalization error bounds obtained for the Gaussian RKHS input function space exhibit convergence rates of $\widetilde O(e^{-2c_7 \lambda(\log m)^{\frac{1}{d+1}}})$, with $c_7 > \left(\frac{d}{3e}\right)^{\frac{d}{d+1}}$. This rate is superior to $(\log m)^{-a}$ for any $a > 0$, yet inferior to $m^{-a}$ for any $a > 0$ in the asymptotic context as $m \to \infty$. This suggests that we are still unable to achieve polynomial learning rates for infinitely differentiable functions.


\begin{theorem} \label{generalizationerror3}
	Let $\alpha, m,n \in\mathbb{N}$, $\Omega=[0,1]^d$. Assume that the input function space $\mathcal{F}$ is a compact subset of  the unit ball of the mixed smooth Sobolev space $H_{mix}^\alpha(\Omega)$, and the modulus of continuity of the regression functional $F_\rho: \mathcal{F} \to \RR$ satisfies the condition (\ref{functionalsmoothness}) with $\lambda\in (0,1]$. Additionally, suppose that the second-stage sample sizes are equal, specifically $n_1=n_2= \cdots =n_m=n$. By choosing the number of nonzero parameters $M$ and second-stage sample size $n$ as
	\begin{equation}
		M = \left\lfloor\frac{m}{(\log m)^{2\alpha\lambda+4}}\right\rfloor, \qquad 
		n\geq \hat C_4 m^{\frac{1}{\alpha}} (\log m)^{1+\frac{1}{C_4 \alpha}+2\lambda-2d\lambda-\frac{4}{\alpha}},
	\end{equation}
	we obtain the following generalization error bound
	\begin{equation}
		\mathop{E}||\pi_LF_{\widehat D}-F_{\rho}||^2_{\rho}\leq \tilde{C}_4 \left(\log m\right)^{-2\alpha\lambda}  \left(\log\log m\right)^{2(d-2) \alpha\lambda},
	\end{equation}
	where $\hat C_4$, $C_4$, $\tilde{C}_4$ are positive constants.
\end{theorem}

Since $\log\log m$ is negligible compared to $\log m$, the generalization error bound we obtain for the mixed smooth Sobolev input function spaces shows convergence rates of $\widetilde{O}((\log m)^{-2\alpha\lambda})$. This rate is independent of the dimension $d$ of the input function spaces, similar to that in the approximation rates.

\section{{Related Work and Summary}}

{
	Numerous studies have explored the applications of deep learning in the FDA. For instance, in the context of solving parametric partial differential equations (PDEs), works such as \cite{khoo2021solving} and \cite{lu2021learning} focus on using operator neural networks to learn the relationship between the parametric function space and the solution space. Regarding inverse scattering problems, \cite{khoo2019switchnet} and \cite{wei2019physics} employ deep learning techniques to learn an operator that maps the observed data function space to the parametric function space that characterizes the underlying PDE for high-frequency phenomena. In the field of signal processing, research works like \cite{andreotti2016open} and \cite{grais2017single} have investigated using deep learning to learn a nonlinear operator that maps a signal function to one or multiple other signal functions. Additionally, there have been various efforts in image processing tasks utilizing deep neural networks, including phase retrieval \citep{deng2020learning}, image super-resolution \citep{qiao2021evaluation}, image inpainting \citep{qin2021image}, and image denoising \citep{tian2020deep}. }

{
	Discretization invariant learning is an important area of research within the FDA. It focuses on learning within infinite-dimensional function spaces while effectively managing various discrete representations of functions either as inputs or outputs in a learning model. To develop mesh-independent models, a traditional approach involves pre-processing the input data and post-processing the output data to ensure that the processed data aligns with the requirements of deep neural networks. Common techniques for processing include interpolation, padding, resizing \citep{keys1981cubic}, and cropping \citep{ravuri2021skilful}. Recent studies have introduced additional methods for discretization invariant learning that utilize kernel integral operators. In this framework, the input function $f$ is transformed to the next layer via a linear integral transformation represented as $v(x)=\int_{\Omega} K(x,t;\theta)f(t) dt$, followed by a nonlinear activation. Here, $K(x,t;\theta)$ denotes the integral kernel parameterized by $\theta$. This integral transformation can be applied independently of the discretization of the input function $f$. The choice of the integral kernel can include convolutional kernels, leading to pure convolutional neural networks \citep{ronneberger2015u,guo2016convolutional,khoo2021solving}, as well as parameterized neural networks \citep{anandkumar2020neural,li2020multipole,li2020fourier,ong2022integral} that are utilized in tasks such as image classification, solving parametric PDEs, and addressing initial value problems. }

{
	We note that from another perspective, employing kernel integral transformation with a smooth kernel can be regarded as a method of kernel smoothing \citep{wand1994kernel}. This technique is commonly used in various domains, including image processing \citep{chung2013statistical} and functional linear regression based on FPCA \citep{yao2005functional, zhou2022functional}. For instance, in the context of image processing, selecting $K$ as a translation-invariant kernel effectively results in a convolution. In functional linear regression, an empirical integral transformation defined as $\widehat{L_K} f_i= \frac{1}{n_i} \sum_{j=1}^{n_i} f_i(t_{i,j}) K(\cdot,t_{i,j})$ serves as a pre-smoothing technique for the input function data that is observed discretely, leveraging a smoothing density kernel. }

{
	The theoretical exploration of artificial neural networks has been ongoing for over thirty years. Recent studies have quantitatively shown that deep ReLU neural networks possess approximation capabilities on par with traditional deep neural networks that utilize infinitely differentiable activation functions, such as sigmoid and tanh functions \citep{telgarsky2016benefits,Yarotsky2017,suzuki2018adaptivity,zhou2020acha,mao2022approximating}. Regarding approximation results in FDA, the foundational result is the universal approximation theorem for nonlinear operators established in \cite{chen1995universal}. Subsequent researches by \cite{mhaskar1997neural}, \cite{bhattacharya2021model}, \cite{kovachki2021universal}, and \cite{lanthaler2022error} have further quantitatively examined the expressivity of deep neural networks in approximating operators. More recently, \cite{mhaskar2023local} investigated the local approximation of operators through deep neural networks. Building on these approximation results, generalization analyses for the ERM algorithm over deep ReLU neural networks have been thoroughly developed in \citep{chui2019deep,SchmidtHieber2020,msz2021}. Recent advancements in FDA have taken a further step by analyzing the generalization error in operator learning using deep neural networks \citep{lanthaler2022error,de2023convergence}. In addition, \cite{liu2024deep} has focused on the nonparametric estimation of Lipschitz operators with deep neural networks, providing non-asymptotic bounds for the generalization error of the ERM algorithm based on a suitably selected class of networks. }

\begin{table}[t]
	\centering
	\begin{tabular}{|c|c|c|}  
		\hline
		{Function classes} & {Approximation error} & {Estimation error} \\  
		\hline
		{Sobolev spaces $H_0^\alpha(\Omega)$} & {$\widetilde O\left((\log M)^{-\frac{\alpha\lambda}{d}}\right)$} & {$\widetilde O\left((\log m)^{-\frac{2\alpha\lambda}{d}}\right)$} \\  
		\hline
		{Gaussian RKHSs $H_\gamma(\Omega)$} & {$\widetilde O\left(e^{-c_7 \lambda \left(\log M \right)^{\frac{1}{d+1}}}\right)$} & {$\widetilde O\left(e^{-2c_7 \lambda(\log m)^{\frac{1}{d+1}}}\right)$}  \\ 
		\hline
		\makecell[c]{{mixed smooth} \\ {Sobolev spaces $H_{mix}^\alpha(\Omega)$}} & {$\widetilde{O}\left((\log M)^{-\alpha\lambda}\right)$} & {$\widetilde O\left((\log m)^{-2\alpha\lambda}\right)$}  \\  
		\hline
	\end{tabular}
	\caption{{Summary of approximation and learning rates achieved by our functional net. $M$ denotes the number of non-zero parameters in the functional net, and $m$ represents the first-stage sample size.}}
	\label{summary}
\end{table}

{
	In this paper, we improve the theoretical comprehension of the FDA by examining the expressiveness and generalization capabilities of the functional deep neural network with kernel embedding that we propose. As detailed in Table \ref{summary}, we establish explicit rates for approximating nonlinear smooth functionals across various input function spaces, such as Sobolev spaces, Gaussian RKHSs, and mixed-smooth Sobolev spaces. Based on these approximation results, we further derive explicit learning rates for the ERM algorithm applied to our functional network, when the second-stage sample size $n$ is sufficiently large based on the employed quadrature scheme. This is achieved through a novel two-stage oracle inequality that takes into account both the first-stage sample size $m$ and the second-stage sample size. }

\section{Numerical Simulations}
\label{section5}

{In this section, we conduct standard numerical simulations to evaluate the performance and advantages of our proposed functional network with kernel embedding (KEFNN), which is trained as outlined in Algorithm \ref{algorithm} in Section \ref{section1}. We start by examining the numerical effectiveness of our model through simulated examples, comparing it with functional networks that utilize different dimension reduction methods, as well as with a standard deep neural network. Following this, we carry out numerical simulations on synthetic data to gain further insights into our model's behavior. This includes assessing the impact of the first-stage sample size, second-stage sample size, network depth, and noise levels on performance, validating the discretization invariant property, and exploring the effects of various quadrature schemes on performance. Finally, we evaluate the effectiveness of KEFNN on several small real functional datasets.}

\subsection{Comparison with baseline approaches} 

To evaluate the performance of our model against baseline approaches, we adopt the same data-generating process used in \cite{yao2021deep}, which is commonly employed in functional data simulations. The input random function is generated through the process $f(t)=\sum_{k=1}^{50} c_k \phi_k(t)$ with $t\in [0,1]$, where $\phi_1(t)=1$,  $\phi_k(t)= \sqrt{2} cos((k-1)\pi t)$ for $k\in \{2,3,\dots,50\}$. The coefficients are defined as $c_k= z_k r_k$, with $r_k$ being i.i.d. uniform random variables on $[-\sqrt{3},\sqrt{3}]$. The first-stage data consists of 4000 samples, while the second-stage data of $f_i(t)$ are observed at discrete points $\{t_{i,j}\}_{j=1}^{51}$ that are equally spaced on $[0,1]$. The training, validation, and test split follows a ratio of $64:16:20$. 

We assess the performance of our method in comparison to some baseline approaches that utilize different dimension reduction techniques, including the direct use of discrete observation points (``Raw data") \citep{rossi2002functional}, projection by basis approaches employing ``B-spline" basis \citep{rossi2005representation}, ``FPCA" basis \citep{rossi2005representation}, and the neural network basis utilized in ``AdaFNN" \citep{yao2021deep}. It is important to note that the number of bases in ``B-spline" and ``FPCA" should not be too small (as this would fail to capture the main information in the input function) or too large (which could lead to overfitting due to noisy observations). In contrast, our method leverages kernel embedding as a pre-smoothing step, allowing for a larger number of eigenfunctions during the projection step, thus retaining as much useful information as possible without being affected by noise. The number of bases in AdaFNN is limited to a maximum of four, as the target functionals depend on at most two bases in the cases considered in the simulation. We provide the mean squared error (MSE) on the test set for different methods in Table \ref{table1}, including the best test-set MSE reported in \cite{yao2021deep} for the other approaches.

For \textit{case 1}, we set $z_1= z_3=5$, $z_5=z_{10}=3$, and $z_k=1$ for other $k$. The response $Y=(\langle f,\phi_5 \rangle)^2= c_5^2$ has a nonlinear relationship with the input function and only depends on one Fourier basis. We don't consider noise in this case. For \textit{case 2}, the other settings are the same as case 1, except that the observations of $f_i(t)$ at each point include a Gaussian noise $N(0,\sigma_1^2)$ with $\sigma_1^2= 11.4$, and the observations of the response $Y$ also have a Gaussian noise $N(0,\sigma_2^2)$ with $\sigma_2^2= 0.3$. For \textit{case 3}, $z_k=1$ for all $k$, and $Y= \langle f,\beta_2 \rangle+ (\langle f,\beta_1 \rangle)^2$, with $\beta_1(t)= (4-16t)\cdot 1\{0\leq t\leq 1/4\}$ and $\beta_2(t)= (4-16|1/2-t|) \cdot 1\{1/4 \leq t \leq 3/4\}$. The noises are also included with $\sigma_1^2= 5$, $\sigma_2^2= 0.1$. For \textit{case 4}, the settings are the same as case 3, except that the noises in the observation of $Y$ are enlarged, i.e., the variance is doubled with $\sigma_2^2= 0.2$.

The deep neural network architectures after the dimension reduction approach are consistent across all methods, each comprising three hidden layers with 128 neurons in each layer, implemented using PyTorch. The functional input and response data are standardized entry-wise for all learning tasks. All functional networks are trained for 500 epochs, with the best model selected based on validation loss. The Adam optimizer is employed for optimization, with a learning rate set at $3\times 10^{-4}$. In our model, we use a Gaussian kernel with bandwidth $\gamma$ as the embedding kernel and utilize the first $d_1$ eigenfunctions of its integral operator, explicitly formulated using Hermite polynomials \cite[pp. 28]{Fasshauer2011}, as the bases for the projection step. The measure $\mu$ in the integral operator is characterized by a Gaussian distribution, whose density function is $u(x)= \frac{\beta}{\sqrt{\pi}} \exp^{-\beta^2 x^2}$, where $\beta$ acts as a global scaling parameter. {The weights $\{\theta_{i,j}\}_{j=1}^n$  are determined using the composite trapezoidal rule.}  Numerical integration is employed to compute the coefficient vector for the projection step. The hyperparameters $\gamma,\beta,d_1$ are selected through cross-validation over a 3D grid of parameters. For the first three cases, the optimal hyperparameters are $\gamma= 0.033$, $\beta=0.008$, $d_1= 150$. In case 4, the other hyperparameters remain the same, with the exception that $\gamma= 0.02$.


\begin{table}
	\centering
	\caption{Comparison of the test-set MSE for functional deep neural networks utilizing different dimension reduction methods.} \label{table1}
	\begin{tabular}{cccccc}
		\toprule  
		&Case 1  & 	Case 2  & 	Case 3   &  Case 4   \\
		\midrule  
		Raw data+NN    & 0.038   &  0.275   &  0.334  &  0.339  	\\
		B-spline+NN    & 0.019   &  0.206   &  0.251  &   0.257 	\\
		FPCA+NN    & 0.023   &  0.134   &  0.667  &  0.693  	\\
		AdaFNN    & 0.003   &  0.127   &  0.193  &   0.207 	\\
		KEFNN    & \textbf{0.001}   &  \textbf{0.100}   &  \textbf{0.142}  &  \textbf{0.191}	\\
		\bottomrule 
	\end{tabular}
\end{table}

From Table \ref{table1}, it is evident that KEFNN consistently surpasses other dimension reduction techniques, primarily due to the careful tuning of hyperparameters utilizing the information from the entire dataset. In these simulations, Gaussian kernels are exclusively employed as the embedding kernel; however, exploring alternative embedding kernels could potentially enhance performance further. Additionally, as shown by the numerical computation \eqref{numericalke} of the coefficient vector in our dimension reduction method, it relies solely on hyperparameters without introducing any training parameters that might escalate computational demands—such as those incurred by eigenfunction estimation in FPCA or the optimization of free parameters in the neural network basis of AdaFNN. Moreover, dimension reduction methods based on B-spline and FPCA depend purely on input function data while ignoring response data, which could result in the selection of bases misaligned with the true target functional. Conversely, in our dimension reduction strategy, hyperparameters are determined using both input and response data. These aspects collectively account for the superior performance of KEFNN compared to the baseline approaches.

\subsection{Additional insights on KEFNN}

In this subsection, we conduct numerical simulations to further explore the behavior of our model. Specifically, we analyze the impact of varying the first-stage sample size $m$ and the second-stage sample size $n$, the depth of KEFNN, the level of observational noise in input functions and responses, as well as the effects of discretization and quadrature schemes on generalization performance. The target functional remains consistent with case 3 from the preceding subsection, expressed as $Y= \langle f,\beta_1 \rangle+ (\langle f,\beta_2 \rangle)^2$. {Unless otherwise indicated, the discrete points are uniformly spaced across the interval $[0,1]$.} Furthermore, the hyperparameter settings in this subsection are maintained as $\gamma= 0.02$, $\beta=0.008$, and $d_1= 150$.

We begin by examining the effect of the second-stage sample size $n$ on generalization performance, with the first-stage sample size fixed at $m= 4000$ and the noise variances set to $\sigma_1^2=3$ and $\sigma_2^2=0.05$. As illustrated in Figure \ref{figures}(a), a phase transition phenomenon is observed in our model around $n=4000$, indicating that further increasing $n$ beyond this point does not lead to improvements in generalization performance. {This requirement for the second-stage sample size is approximately equivalent to the first-stage sample size when trapezoidal rules are applied for the quadrature problem.}

\begin{figure}
	\centering
	\subfigure[Test-set MSE w.r.t. second-stage sample size]{
		\centering\includegraphics[width=0.42\textwidth]{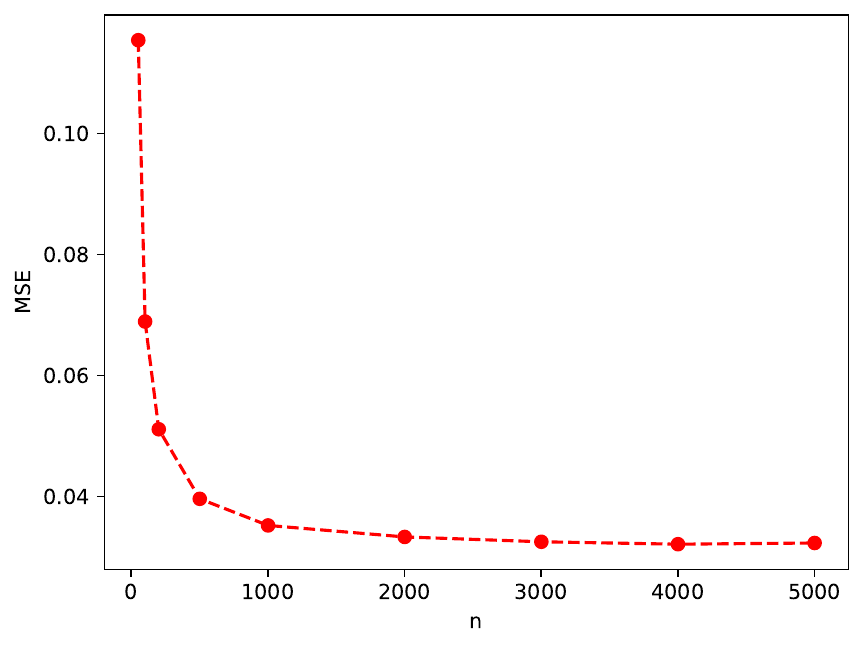}
	}
	\quad
	\subfigure[Test-set MSE w.r.t. depth of KEFNN]{
		\includegraphics[width=0.43\textwidth]{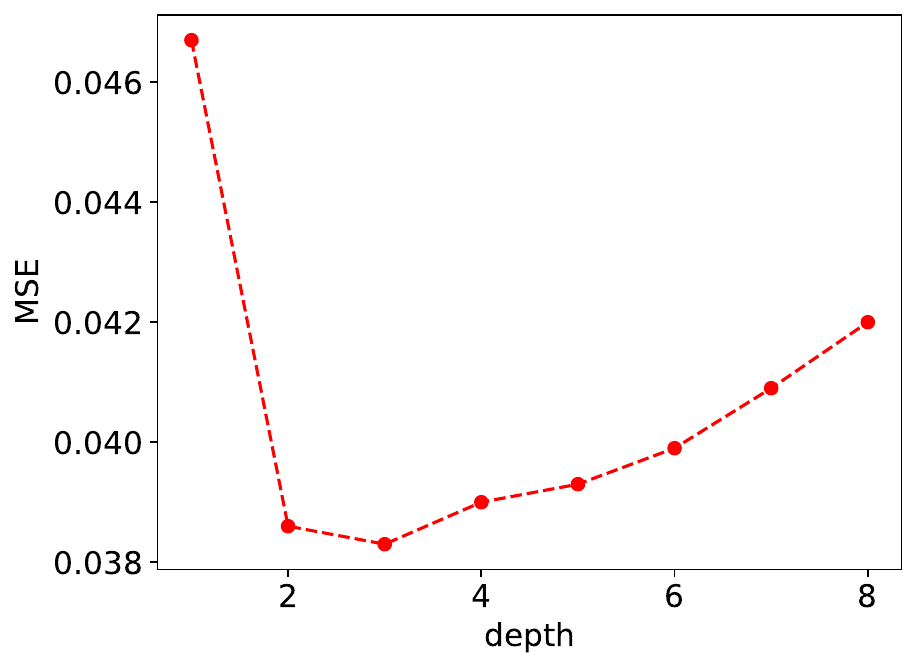}
	}
	\quad
	\subfigure[Test-set MSE w.r.t. first-stage sample size]{
		\includegraphics[width=0.447\textwidth]{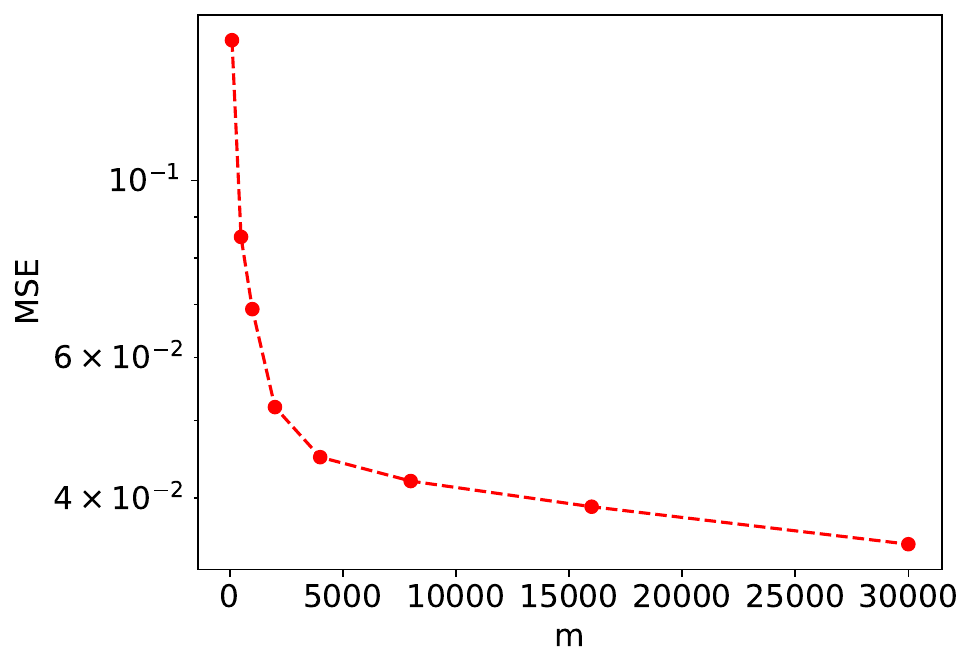}
	}
	\quad
	\subfigure[Inverse of test-set MSE w.r.t. log of first-stage sample size]{
		\includegraphics[width=0.4\textwidth]{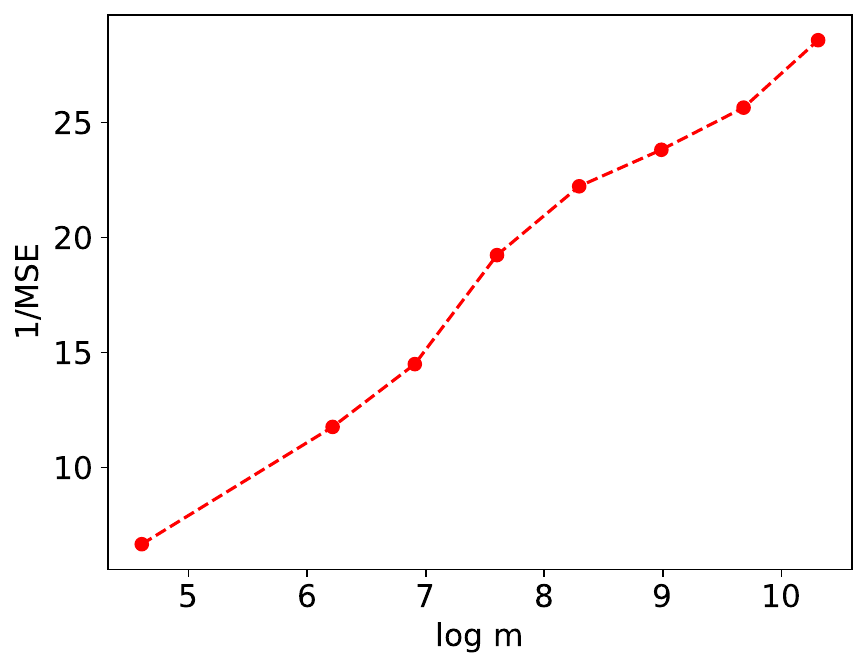}
	}
	\quad
	\subfigure[Test-set MSE w.r.t. the variance of Gaussian noises in observations of input functions]{
		\includegraphics[width=0.42\textwidth]{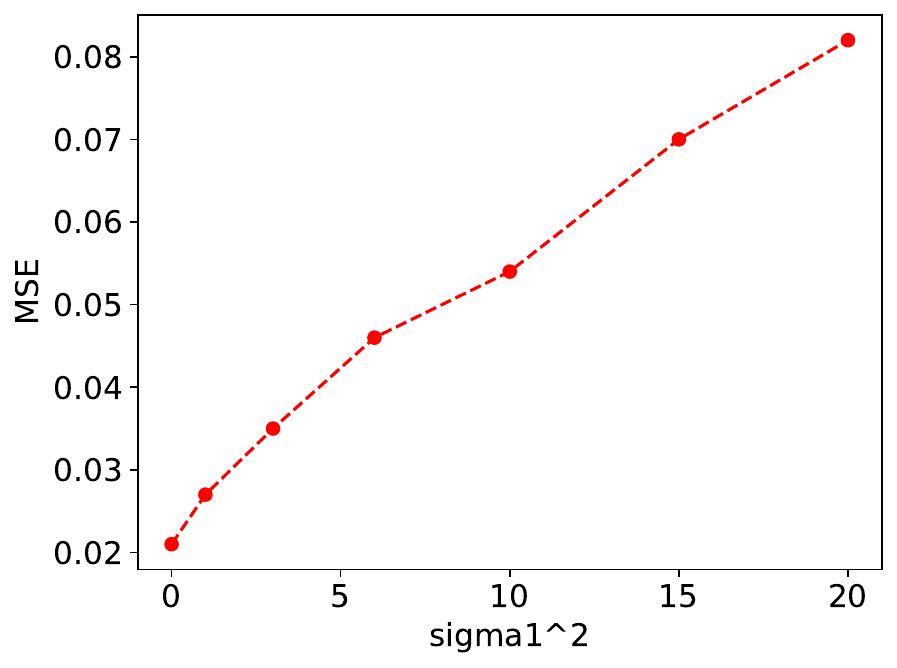}
	}
	\quad
	\subfigure[Test-set MSE w.r.t. the variance of Gaussian noises in observations of responses]{
		\includegraphics[width=0.42\textwidth]{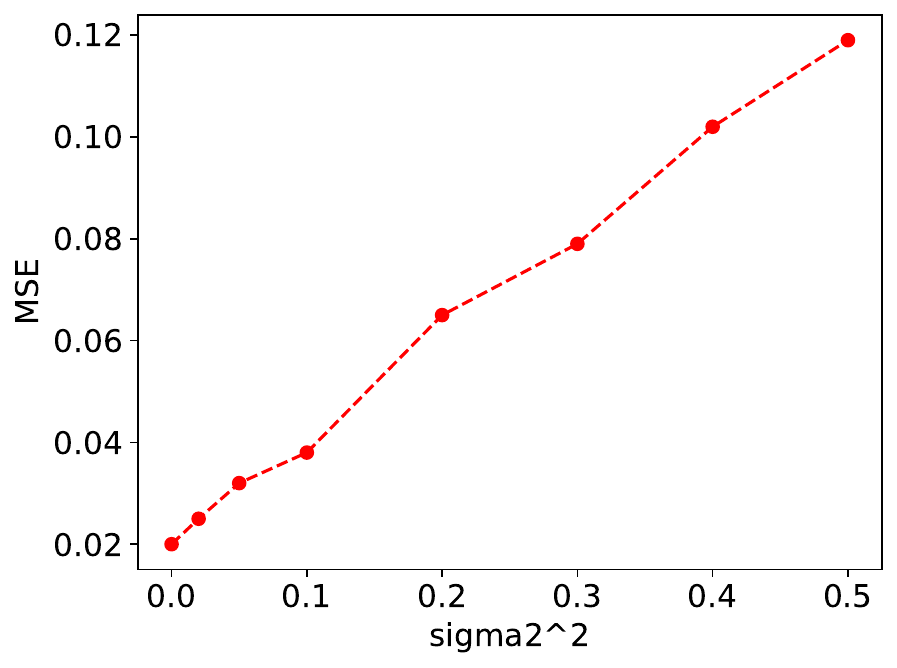}
	}
	\caption{The influence of second-stage sample size $n$, depth of KEFNN, first-stage sample size $m$, variances of Gaussian noises in observations of input functions and responses on the test-set MSE.}
	\label{figures}
\end{figure}

Next, we investigate how the depth of KEFNN influences generalization performance within the under-parameterized regime. For this study, we set the first-stage sample size to $m= 5000$, the second-stage sample size to $n=500$, and the noise variances to $\sigma_1^2=3$ and $\sigma_2^2=0.05$. The width of the deep neural network in KEFNN is fixed at 16. Figure \ref{figures}(b) reveals that as the depth increases, the generalization error initially decreases but then begins to rise. This behavior aligns with our theoretical analysis in Section \ref{section4}, which suggests that the generalization error is minimized when the number of nonzero parameters in KEFNN follows a specific rate relative to the first-stage sample size. This finding provides partial guidance on how to conduct model selection for KEFNN.

In Figure \ref{figures}(c), we illustrate the effect of the first-stage sample size $m$ on generalization performance, where the second-stage sample size is fixed at $n=500$, and the noise variances are set to $\sigma_1^2=3$ and $\sigma_2^2=0.05$. To further highlight the decay rate of the generalization error with respect to increasing $m$, we plot the inverse of the test-set MSE against the $\log$ of the first-stage sample size $m$ in Figure \ref{figures}(d). From this, we observe that the generalization error decreases at a rate of approximately $O((\log m)^{-1})$, which corresponds to a polynomial rate in $\log m$. This observation aligns with the learning rates established in Section \ref{section4}. Such polynomial rates in $\log m$ arise from the curse of dimensionality while approximating nonlinear smooth functionals. Therefore, it is crucial to investigate the conditions under which this curse of dimensionality can be mitigated for functional data.

Next, we analyze the impact of noise in the observations of input functions and responses on generalization performance. The variation in test-set MSE loss with respect to the variances of Gaussian noise in the observations of input functions and responses is depicted in Figure \ref{figures}(e) and Figure \ref{figures}(f), respectively. For this simulation, we set the first-stage sample size to $m= 4000$ and the second-stage sample size to $n= 500$. In Figure \ref{figures}(e), the variances of noise in responses are fixed at $\sigma_2^2=0$, while in Figure \ref{figures}(f), the variances of noise in input functions are fixed at $\sigma_1^2=0$. Interestingly, in both scenarios, we observe that the generalization error increases almost linearly with the variances of Gaussian noise in the observations. This phenomenon prompts further theoretical exploration to offer a rigorous explanation. Moreover, it underscores that employing kernel embedding as a pre-smoothing technique imparts a certain level of robustness to KEFNN against observational noise in both input functions and responses.

\begin{table} [t]
	\centering
	\caption{{The test-set MSE for different discretization schemes with varying second-stage sample sizes ($n$) across three cases of first-stage sample sizes ($m$).}} \label{table2}
	{
		\begin{tabular}{cccccc}
			\toprule  
			&	n=3965 & n=3987  & n=4002  &  n=4034   &  n=4042   \\
			\midrule  
			m=1000    & 0.0669   &  0.0640   &  0.0665  &  0.0650  &  0.0659	\\
			m=2000   & 0.0519   &  0.0496   &  0.0499  &   0.0513  &  0.0496	\\
			m=3000    & 0.0470   &  0.0476   &  0.0475  &  0.0462   &  0.0467 	\\
			\bottomrule 
		\end{tabular}
	}
\end{table}

{We further investigate the impact of various discretizations on the generalization performance of KEFNN. The noise variances are fixed at $\sigma_1^2=3$ and $\sigma_2^2=0.05$, and we assess the performance under different discretizations with varying second-stage sample sizes: $n=3965$, $n=3987$, $n=4002$, $n=4034$, and $n=4042$, across three cases of first-stage sample sizes: $m=1000$, $m=2000$, and $m=3000$. The discretization points are i.i.d.\ sampled from a uniform distribution on $[0,1]$. The averaged test-set MSE over five trials is presented in Table \ref{table2}. The results suggest that KEFNN demonstrates a property of discretization invariance, since functional networks trained with different levels of discretization yield similar generalization performance.
}

\begin{figure}[t]
	\centering
	\includegraphics[width=0.9\textwidth]{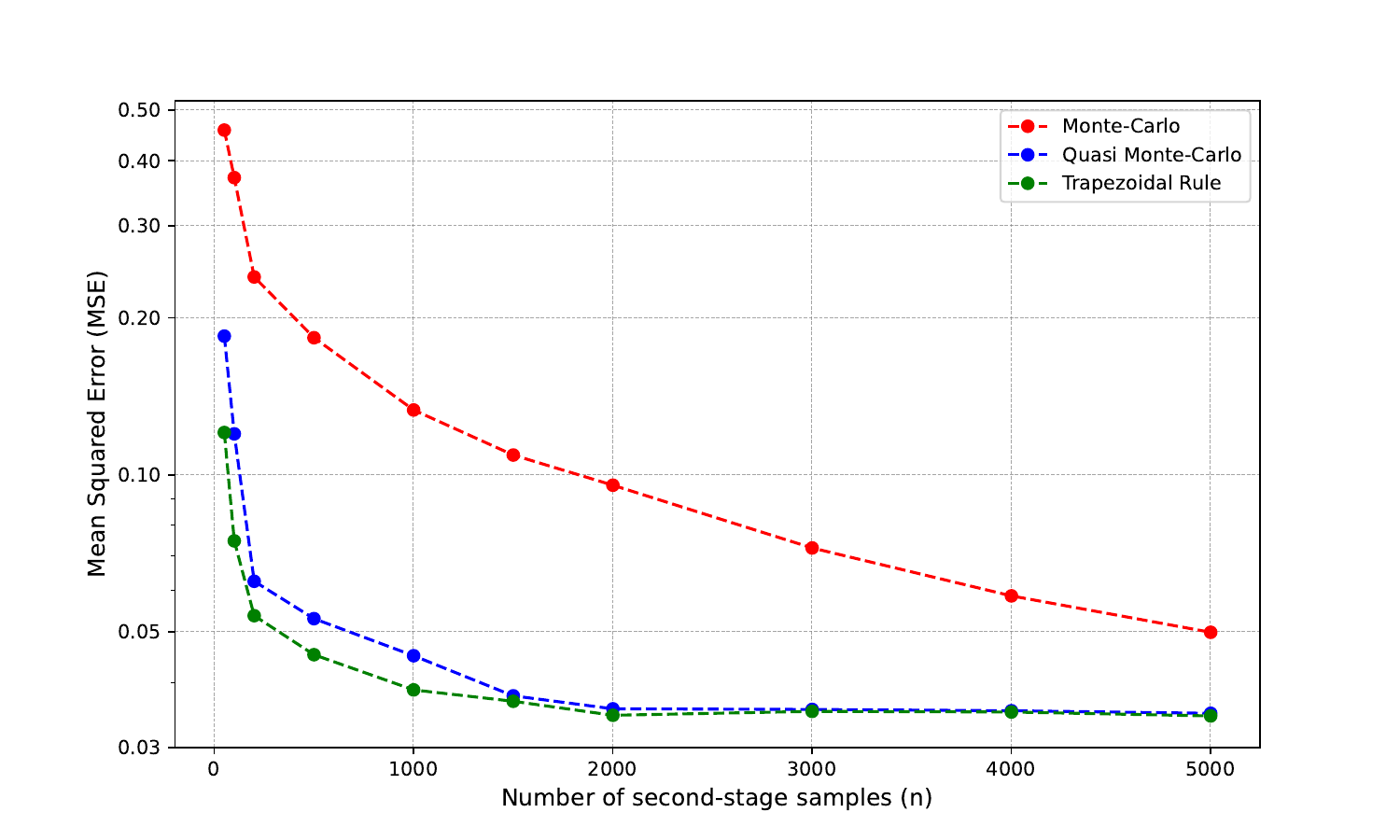}  
	\caption{{The test-set MSE vs. number of second-stage samples (n) in logarithmic scale.}}
	\label{figure_quadrature}
\end{figure}

{Finally, we analyze how different kernel quadrature methods affect the generalization performance of KEFNN. We keep the first-stage sample size at $m=2000$ and the noise variances fixed at $\sigma_1^2=3$ and $\sigma_2^2=0.05$. We evaluate various quadrature schemes, including Monte-Carlo methods, Quasi Monte-Carlo methods, and trapezoidal rules, across different second-stage sample sizes. In the case of Monte-Carlo methods, the discretization points $\{t_{i,j}\}_{j=1}^n$ are i.i.d.\ sampled from a uniform distribution on $[0,1]$. For the Quasi Monte-Carlo methods, we utilize low-discrepancy Sobol sequences for the discretization. The trapezoidal rule utilizes discretizations consisting of evenly spaced points on $[0,1]$. The weights $\{\theta_{i,j}\}_{j=1}^n$ of all three quadrature rules are determined by the composite trapezoidal rule. We plot the test-set MSE, averaged over five trials, against the number of second-stage samples $n$ for these three quadrature methods in Figure \ref{figure_quadrature}. The results indicate that Monte-Carlo method exhibits quite slow convergence, while the Quasi Monte-Carlo method converges more rapidly, although it is still slightly less effective than the evenly spaced discretizations.
}

\subsection{Examples on real functional datasets}

In this subsection, we apply the KEFNN to several classical small real functional datasets and evaluate its performance by comparing it with baseline dimension reduction techniques, specifically ``Raw data'', ``B-spline", and ``FPCA" methods. Each of these techniques is followed by a consistent deep neural network that consists of three hidden layers, each containing 32 neurons. We have opted not to include a comparison with AdaFNN, as the limited number of training samples in these datasets can lead to overfitting issues for AdaFNN in certain cases. To assess the performance of each method, we utilize RMSE (root mean square error) as our evaluation metric. The dimension reduction scores for ``B-spline" and ``FPCA" methods are computed using functions from the Python package ``scikit-fda" \citep{ramos2022scikit}. Furthermore, the selection of the number of B-spline bases in ``B-spline" approach, the number of eigenfunction bases in ``FPCA" approach, and the hyperparameters in our ``KEFNN" approach is conducted through cross-validation to ensure optimal model performance. The performance of these approaches across four learning tasks with real functional datasets is summarized in Table \ref{realdata}, presenting the mean and standard deviation of the test-set RMSE based on five training runs.

In \textit{Task 1}, we utilize the Medflies dataset from the Python package ``scikit-fda". This dataset comprises 534 samples of Mediterranean fruit flies (Medfly), documenting the daily number of eggs laid from day 5 to day 34 for each fly. Our objective is to leverage the early trajectory of daily egg-laying (the first 20 days) to predict the overall reproduction over a 30-day period. For \textit{Task 2} and \textit{Task 3}, we employ the Tecator dataset, also from the ``scikit-fda" package. This dataset consists of 240 meat samples, each represented by a 100-channel spectrum of absorbances along with the respective contents of moisture (water), fat, and protein. In Task 2, we aim to predict the fat content of a meat sample based on its absorbance spectrum, while in Task 3, we focus on predicting the moisture content. In \textit{Task 4}, we work with the Moisture dataset from the R package ``fds". This dataset features near-infrared reflectance spectra of 100 wheat samples, measured at 2 nm intervals from 1100 to 2500 nm, alongside their associated moisture content. Our task is to predict the moisture content of a wheat sample based on its near-infrared reflectance spectra. For all tasks, we follow a training, validation, and test split ratio of 64:16:20. In Task 1, the hyperparameters for KEFNN are set to $\gamma= 0.033$, $\beta=0.01$, $d_1= 100$. In the other three tasks, the hyperparameters are adjusted to $\gamma= 0.033$, $\beta=0.1$, $d_1= 150$.

\begin{table}
	\setlength{\tabcolsep}{6pt}
	\centering
	\caption{The test-set RMSE of the functional deep neural network employing various dimension reduction methods across four learning tasks with real functional datasets.} \label{realdata}
	\scalebox{0.94}{	\begin{tabular}{cccccc}
			\toprule  
			&Task 1  & 	Task 2  & 	Task 3   &  Task 4   \\
			\midrule  
			Raw data+NN    & 0.2394 $\pm$ 0.010   &  0.1505 $\pm$ 0.007   &  0.2076 $\pm$ 0.028 &  0.1268 $\pm$ 0.013	\\
			B-spline+NN    &  0.2421 $\pm$ 0.003   &  0.1288 $\pm$ 0.010   &  0.2059 $\pm$ 0.010 &  0.1238 $\pm$ 0.033	\\
			FPCA+NN    &  0.2550 $\pm$ 0.085   &  0.1121 $\pm$ 0.015   &  0.1255 $\pm$ 0.015 &  0.1763 $\pm$ 0.023	\\
			KEFNN    & \textbf{0.2254 $\pm$ 0.002}   &  \textbf{0.0614 $\pm$ 0.003}   &  \textbf{0.1003 $\pm$ 0.011}  &  \textbf{0.1163 $\pm$ 0.013}	\\
			\bottomrule 
	\end{tabular}}
\end{table}

From Table \ref{realdata}, it is evident that KEFNN consistently outperforms the other baseline dimension reduction methods across various real functional datasets. This result underscores the advantages and flexibility of our approach in learning various target functionals. In contrast, no single baseline method consistently outperforms the others across all learning tasks. Additionally, we observe that the standard deviation of the test-set RMSE for KEFNN is notably smaller than that of the other methods. This observation suggests that KEFNN might be less influenced by the randomness of data, leading to a more stable performance. Consequently, the generalization performance of KEFNN is likely to fall within a narrower range with higher probability, indicating enhanced reliability and consistency in its predictions.

\section*{Acknowledgments}
The research leading to these results received funding from the European Research Council under the European Union's Horizon 2020 research and innovation program/ERC Advanced Grant E-DUALITY (787960). This article reflects only the authors' views, and the EU is not liable for any use that may be made of the contained information; Flemish government (AI Research Program); Leuven.AI Institute. The research of Jun Fan is supported partially by the Research Grants Council of Hong Kong [Project No. HKBU 12302819] and [Project No. HKBU 12301619]. The work of D. X. Zhou described in this paper was fully/substantially/partially supported by the InnoHK initiative, The Government of the HKSAR, and the Laboratory for AI-Powered Financial Technologies. We also thank Dr. Zhen Zhang for his helpful discussions on this work.

\newpage

\appendix

\section*{Appendix}
This appendix provides proof of the main results in Section \ref{section2} and Section \ref{section4}, and some additional useful lemmas for the proof.

\section{Proof of Main Results} \label{section6}
In this section, we prove the main results in this paper.
\subsection{Proof of main results in Section \ref{section2}}

\subsubsection{Proof of Theorem \ref{approxrate1}}
{\bf Proof}. 
Denote $\Xi_{d_1}$ as the space spanned by the basis $\{\phi_1,\phi_2,\dots,\phi_{d_1}\}$, and the isometric isomorphism $U_{d_1}: (\RR^{d_1},\|\cdot\|_2) \to (\Xi_{d_1},\|\cdot\|_{L_2(\mu)})$ as
\begin{equation}
	U_{d_1}(v)= \sum_{i=1}^{d_1} v_i \phi_i,  \qquad v\in \RR^{d_1},
\end{equation}
where the measure $\mu$ is specified later in the proof.

To prove the main results of approximation rates for the target functional $F$, the key is the following error decomposition with three error terms to bound.
\begin{equation}
	\begin{aligned}
		& \sup_{f\in \mathcal{F}} |F(f)-F_{NN}(f)| \\
		\leq &  \sup_{f\in \mathcal{F}} |F(f)-F(L_K f)| \\
		+ & \sup_{f\in \mathcal{F}} |F(L_K f)-F(U_{d_1} \circ T(L_K f))| \\
		+ & \sup_{f\in \mathcal{F}} |F \circ U_{d_1} ( T(L_K f))-F_{NN}(f)|,
	\end{aligned}
\end{equation}

The first term focuses on the error of approximating the input function $f$ by its embedded function $L_K f$, the second term focuses on the error of approximating the embedded function $L_K f$ by its projection on the subspace spanned by the eigenfunction basis, and the final term focuses on the error of approximating a $\lambda$-H\"older continuous function by a deep ReLU neural network. We then bound these three error terms individually in the following.

For the first term, with the choice of the embedding kernel $K$ being (\ref{embedkernel1}), according to Lemma \ref{steinwart}, $\forall f \in \mathcal{F}$, we have
\begin{equation*}
	\|f- L_K f\|_{L_2(\RR^d)} \leq C_{d,r}  \omega_{r,L_2(\RR^d)}\left(f,\frac{\gamma}{\sqrt{2}} \right) \leq C_{d,r,\alpha} \gamma^{\alpha},
\end{equation*}
where $C_{d,r,\alpha}= C_{d,r} (\sqrt{2})^{-\alpha}$ is a constant only depending on $d,r,\alpha$. Thus,
\begin{equation} \label{approerror1}
	\sup_{f\in \mathcal{F}} |F(f)-F(L_K f)|  \leq  \sup_{f\in \mathcal{F}}  C_F \|f-L_K f\|_{L_2(\RR^d)}^{\lambda} 
	\leq  C_F C_{d,r,\alpha}^\lambda \gamma^{\alpha\lambda}.
\end{equation}

For the second term, since $\forall f\in \mathcal{F}$, $L_K f \in \mathcal{H}_{\gamma}$, it can be written as $L_K f= \sum_{i=1}^\infty a_i \sqrt{\lambda_i} \phi_i$, with $\|L_K f\|_{H_\gamma}= \sqrt{\sum_{i=1}^\infty a_i^2} \leq (2^r-1)\pi^{-\frac{d}{4}} \gamma^{-\frac{d}{2}}$  by Lemma \ref{steinwart}.
Then we have
\begin{equation*}
	T(L_K f)= \left[a_1 \sqrt{\lambda_1}, a_2 \sqrt{\lambda_2}, \dots, a_{d_1} \sqrt{\lambda_{d_1}} \right],
\end{equation*}
and hence
\begin{equation*}
	U_{d_1} \circ T(L_K f)= \sum_{i=1}^{d_1}  a_i \sqrt{\lambda_i} \phi_i.
\end{equation*}
It follows that
\begin{eqnarray*}
	\nonumber && \left\|L_K f- U_{d_1} \circ T(L_K f)\right\|_{L_2(\mu)}^2 = \left\| \sum_{i=d_1+1}^{\infty}  a_i \sqrt{\lambda_i} \phi_i \right\|_{L_2(\mu)}^2 \\
	& = & \sum_{i=d_1+1}^{\infty}  a_i^2 \lambda_i \leq \lambda_{d_1+1} \sum_{i=d_1+1}^{\infty}  a_i^2 \leq \lambda_{d_1+1} \|L_K f\|_{H_\gamma}^2.
\end{eqnarray*}

The eigenvalues of the integral operator associated with a one-dimensional Gaussian kernel can be expressed with an explicit formulation that exhibits exponential decay \citep[pp. 28]{Fasshauer2011}, specifically
\begin{equation*}
	\lambda_i= \frac{\beta}{\gamma^{2i} \left(\frac{\beta^2}{2} \left(1+\sqrt{1+\left(\frac{2}{\beta \gamma} \right)^2} \right)+\frac{1}{\gamma^2} \right)^{i+\frac{1}{2}}} \leq \frac{\beta \gamma}{\left(1+\frac{\beta \gamma}{2} \right)^{2i+1}},
\end{equation*}
where the measure $\mu$ is selected as a Gaussian distribution, characterized by the density function $u(x)= \frac{\beta}{\sqrt{\pi}} \exp^{-\beta^2 x^2}$, and $\beta$ serves as a global scaling parameter. We use this explicit form because the bandwidth $\gamma$ is a hyperparameter that requires careful tuning later and should not be treated as a constant. 

Notice that the $\binom{j+d}{d}$-th eigenvalue of the integral operator associated with a multi-dimensional Gaussian kernel $k_\gamma$ can be bounded by $\frac{(\beta \gamma)^d}{\left(1+\frac{\beta \gamma}{2} \right)^{2j+d}}$ \citep[pp. 98]{rasmussen2006gaussian}. If we denote $\binom{j+d}{d} \leq  \frac{(j+\frac{d}{2})^d}{d!}=: d_1$, then the $d_1$-th eigenvalue of the integral operator corresponding to $k_\gamma$ can be bounded by
\begin{equation} \label{gaussiankerneleigen}
	\lambda_{d_1} \leq \frac{(\beta \gamma)^d}{\left(1+\frac{\beta \gamma}{2} \right)^{2c_2 d_1^{\frac{1}{d}}}},
\end{equation}
where $c_2=(d!)^\frac{1}{d}> \frac{d}{e}$ by Stirling's theorem. 
By choosing $\beta=2$, we get
\begin{eqnarray} \label{approerror2}
	\nonumber \sup_{f\in \mathcal{F}} |F(L_K f)-F(U_{d_1} \circ T(L_K f))| &\leq& \sup_{f\in \mathcal{F}} C_F \|L_K f-U_{d_1} \circ T(L_K f)\|_{L_2(\mu)}^\lambda \\
	&\leq& C_F C_{d,r,\lambda}  \left(1+\gamma \right)^{-c_2\lambda d_1^{\frac{1}{d}}},
\end{eqnarray}
where $C_{d,r,\lambda}$ is a constant depending only on $d,r,\lambda$.

For the third term, since $F\circ U_{d_1}$ follows the smoothness of $F$, $\forall y_1,y_2 \in \RR^{d_1}$, we have
\begin{equation}
	\begin{aligned}
		|F\circ U_{d_1}(y_1)-F\circ U_{d_1}(y_2)| &\leq C_F \|U_{d_1}(y_1)-U_{d_1}(y_2)\|_{L_2(\mu)}^\lambda \\
		&= C_F \|y_1-y_2\|_2^\lambda.
	\end{aligned}
\end{equation}
Therefore, $F\circ U_{d_1}$ is essentially a $\lambda$-H\"older continuous function defined on $[-R,R]^{d_1}$, where $R= \| T(L_K f)\|_\infty$. Then by Lemma \ref{lipschitzappro},
there exists a deep ReLU neural network  $H_{NN}$ with depth $d_1^2+d_1+1$,  and $M$ nonzero parameters such that
\begin{equation*}
	\sup_{y\in [-R,R]^{d_1}} |F\circ U_{d_1} ( y)-H_{NN}(y)| \leq 2C_F d_1 \left(\frac{c_0 d_1^{\frac{4}{d_1}} R}{M^{\frac{1}{d_1}}} \right)^\lambda,
\end{equation*}
where $c_0$ is a constant independent of $d_1, M$.

Recalling the architecture of our functional network as defined in Definition \ref{deffunctionalNN}, it can be alternatively represented as $F_{NN}(f)=H_{NN}\circ T(L_K f)$, where $H_{NN}$ denotes a standard deep ReLU neural network with an input dimension $d_1$. Moreover, according to Lemma \ref{steinwart} and the fact that $C_K = \tilde{c}_0 \gamma^{-d}$ with $\tilde c_0= \pi^{-\frac{d}{2}} \sum_{j=1}^r \binom{r}{j} \frac{1}{j^d}$ being a positive constant, the radius of the input cube can be bounded by
\begin{equation*}
	\begin{aligned}
		R &= \| T(L_K f)\|_\infty \leq \|L_K f\|_{L_2(\mu)} \leq  \sqrt{\mu(\RR^d)} \|L_K f\|_\infty \\
		& \leq  \sqrt{\mu(\RR^d) C_K} \|L_K f\|_{H_\gamma}  \leq \sqrt{\tilde{c}_0 \mu(\RR^d)} (2^r-1)\pi^{-\frac{d}{4}}  \gamma^{-d} ,
	\end{aligned}
\end{equation*}
Hence, we conclude that there exists a functional net $F_{NN}$ following the architecture in Definition \ref{deffunctionalNN} with depth $J=d_1^2+d_1+2$ and $M$ nonzero parameters such that
\begin{equation} \label{approerror3}
	\sup_{f\in \mathcal{F}} |F\circ U_{d_1} ( T(L_K f))-F_{NN}(f)| \leq 2C_F d_1 \left(\frac{c_1 d_1^{\frac{4}{d_1}} \gamma^{-d}}{M^{\frac{1}{d_1}}} \right)^\lambda,
\end{equation}
where $c_1= c_0 \sqrt{\tilde{c}_0 \mu(\RR^d)} (2^r-1) \pi^{-\frac{d}{4}}$ is a constant independent of $d_1, M$.

Finally, by combining (\ref{approerror1}), (\ref{approerror2}), and (\ref{approerror3}), we have
\begin{equation}
	\sup_{f\in \mathcal{F}} |F(f)-F_{NN}(f)|  \leq C_F C_{d,r,\alpha}^\lambda \gamma^{\alpha\lambda}+C_F C_{d,r,\lambda}  \left(1+\gamma \right)^{-c_2\lambda d_1^{\frac{1}{d}}}+ 2C_F d_1 \left(\frac{c_1 d_1^{\frac{4}{d_1}} \gamma^{-d}}{M^{\frac{1}{d_1}}} \right)^\lambda.
\end{equation}
To balance these three error terms, we can choose
\begin{equation}
	d_1 = \tilde c_1 \frac{\log M}{\log \log M}, \quad \gamma = \tilde c_2 \left(\frac{\log \log M}{\log M}\right)^{\frac{1}{d}} \log\log M,
\end{equation}
where $\tilde c_1, \tilde c_2$ are positive constants that will be determined later. Thus, considering the first error term, we have
\begin{equation} \label{th1error1}
	\gamma^{\alpha\lambda}= \tilde {c}_2^{\alpha\lambda} \left(\log M\right)^{-\frac{\alpha \lambda}{d}} \left(\log\log M\right)^{\left(\frac{1}{d}+1\right) \alpha\lambda},
\end{equation}
since $(1+\frac{1}{x})^x \geq c_3$ for some constant $c_3 \in (1,e)$ when $x$ is sufficiently large, we can set $x= \frac{1}{\gamma}$. When $M$ is large enough, considering the second error term, we have
\begin{equation} \label{th1error2}
	\left(1+\gamma \right)^{-c_2\lambda d_1^{\frac{1}{d}}}= \left[(1+\frac{1}{x})^{x}\right]^{-c_2 \tilde c_1^{\frac{1}{d}}\tilde c_2 \lambda \log\log M} 
	\leq \left(\log M \right)^{- c_2 \tilde{c}_1^{\frac{1}{d}} \tilde c_2 \lambda \log c_3}.
\end{equation}
Finally, since $d_1^{\frac{4}{d_1}} \leq c_4$ for some constant $c_4$, and given that $M^{\frac{1}{d_1}}= (\log M)^{\frac{1}{\tilde{c}_1}}$. Considering the third error term, we have
\begin{equation} \label{th1error3}
	d_1 \left(\frac{c_1 d_1^{\frac{4}{d_1}} \gamma^{-d}}{M^{\frac{1}{d_1}}} \right)^\lambda \leq c_1^\lambda c_4^\lambda \tilde{c}_1 \tilde{c}_2^{-d} \left(\log M \right)^{\frac{3\lambda}{2}+1-\frac{\lambda}{\tilde{c}_1}}  \left(\log\log M \right)^{-1-(d+1)\lambda}.
\end{equation}
Therefore, to balance the error terms \eqref{th1error1}, \eqref{th1error2}, and \eqref{th1error3} w.r.t. the exponential rates on $\log M$, we can choose the constants to satisfy
\begin{equation}
	\tilde{c}_1 < \frac{1}{1+\frac{\alpha}{d}+\frac{1}{\lambda}}, \qquad \tilde{c}_2 > \frac{\alpha}{dc_2 \tilde{c}_1^{\frac{1}{d}}\log c_3}.
\end{equation}
Thus we obtain the desired bounds on the approximation error and the depth of the functional network, with the constants specified as $\tilde{c}_3= 2\tilde{c}_1^2 +2$, and $\tilde{c}_4= C_F C_{d,r,\alpha}^\lambda \tilde {c}_2^{\alpha\lambda}+ C_F C_{d,r,\lambda}+ 2C_F c_1^\lambda c_4^\lambda \tilde{c}_1 \tilde{c}_2^{-d\lambda}$. 	\qed

\subsubsection{Proof of Theorem \ref{approxrate12}}

{\bf Proof}. 
The only difference between the functional network architecture used in the proof here and that presented in Theorem \ref{approxrate1} lies in the kernel embedding step, where the domain of the input function is changed from $\RR^d$ in Theorem \ref{approxrate1} to $\Omega$ considered here. By denoting
\begin{equation*}
	L_K f(x)= \int_{\Omega} K(x,t)f(t) dt, \quad x\in \Omega,
\end{equation*}
and $\tilde{f}$ as the extension by zero of $f$ to $\RR^d$. We have
\begin{equation*}
	L_K \tilde{f}(x)= \int_{\RR^d} K(x,t) \tilde f(t) dt,  \quad x\in \RR^d.
\end{equation*}
Notice that $L_K f=(L_K \tilde{f})|_{\Omega}$, it follows that
\begin{equation*}
	\left\|f-L_K f\right\|_{L_2(\mu)} \leq \left\|\tilde f-L_K \tilde{f}\right\|_{L_2(\tilde \mu)},
\end{equation*}
where we set $\mu=\tilde{\mu}|_\Omega$, and $\tilde{\mu}$ is chosen as the same Gaussian distribution in the proof of Theorem \ref{approxrate1}. The rest of the proof follows directly from the result in Theorem \ref{approxrate1} applied to approximate nonlinear functionals defined on $\tilde{f}$.	\qed

\subsubsection{Proof of Theorem \ref{approxrate2}}

{\bf Proof}. 
In this case, the domain of the target functional is already a Gaussian RKHS. As a result, we can use the error decomposition with only the last two error terms considered in Theorem \ref{approxrate1}, with a focus on balancing the choice of $d_1$. Specifically, by denote the mapping $V_{d_1}: (\RR^{d_1},|\cdot|) \to (\Xi_{d_1},\|\cdot\|_{L_2(\mu)})$ as
\begin{equation}
	V_{d_1}(v)= \sum_{i=1}^{d_1} \frac{v_i}{\lambda_i} \phi_i,  \qquad v\in \RR^{d_1}.
\end{equation}
Now the error decomposition contains only two error terms,
\begin{equation}
	\begin{aligned}
		& \sup_{f\in \mathcal{F}} |F(f)-F_{NN}(f)| \\
		\leq & \sup_{f\in \mathcal{F}} |F(f)-F(V_{d_1} \circ T(L_K f))| \\
		+ & \sup_{f\in \mathcal{F}} |F \circ V_{d_1} ( T(L_K f))-F_{NN}(f)|.
	\end{aligned}
\end{equation}

Let us consider the first error term. For any $f\in \mathcal{H}_\gamma$, denote $f=\sum_{i=1}^\infty a_i \sqrt{\lambda_i} \phi_i$, with $\|f\|_{H_\gamma}= \sqrt{\sum_{i=1}^\infty a_i^2}\leq 1$. Since we choose the embedding kernel $K$ as defined in \eqref{embeddingkernelg}, the projection kernel $K_0$ as $k_\gamma$, and the projection bases as the first $d_1$ eigenfunctions of the integral operator $L_{K_0}^\mu$, we have
\begin{equation*}
	T(L_K f)= \Big[a_1 \lambda_1^{\frac{3}{2}}, a_2  \lambda_2^{\frac{3}{2}}, \dots, a_{d_1}  \lambda_{d_1}^{\frac{3}{2}} \Big],
\end{equation*}
\begin{equation*}
	V_{d_1} \circ T(L_K f)= \sum_{i=1}^{d_1}  a_i  \sqrt{\lambda_i} \phi_i.
\end{equation*}
It follows that
\begin{eqnarray*}
	\nonumber && \left\|f- V_{d_1} \circ T(L_K f)\right\|_{L_2(\mu)}^2 = \Big\| \sum_{i=d_1+1}^{\infty}  a_i \sqrt{\lambda_i} \phi_i \Big\|_{L_2(\mu)}^2 \\
	& = & \sum_{i=d_1+1}^{\infty}  a_i^2 \lambda_i \leq \lambda_{d_1+1} \sum_{i=d_1+1}^{\infty}  a_i^2 \leq \lambda_{d_1+1} \| f\|_{H_\gamma}^2.
\end{eqnarray*}
Therefore, by utilizing the eigenvalue upper bound of the Gaussian kernel as specified in \eqref{gaussiankerneleigen}, the first error term is bounded by
\begin{eqnarray} \label{approerror4}
	\nonumber \sup_{f\in \mathcal{F}} |F(f)-F(V_{d_1} \circ T(L_K f))| &\leq& \sup_{f\in \mathcal{F}} C_F \|f-V_{d_1} \circ T(L_K f)\|_{L_2(\mu)}^\lambda \\
	&\leq& C_F C_{d,\gamma,\lambda}  e^{-c_2\lambda d_1^{\frac{1}{d}}},
\end{eqnarray}
where $C_{d,\gamma,\lambda}$ is a constant depending only on $d,\gamma,\lambda$, and $c_2$ is a constant with $c_2> \frac{d}{e}$.

As for the second error term, notice that $F\circ V_{d_1}$ follows the smoothness of $F$ with a scaling on the constant term, that is
\begin{equation}
	\begin{aligned}
		|F\circ V_{d_1}(y_1)-F\circ V_{d_1}(y_2)| &\leq C_F \|V_{d_1}(y_1)-V_{d_1}(y_2)\|_{L_2(\mu)}^\lambda \\
		&\leq \frac{C_F}{\lambda_{d_1}^\lambda} \|y_1-y_2\|_2^\lambda.
	\end{aligned}
\end{equation}
Furthermore, the radius of the input for the deep ReLU neural network can be bounded by
\begin{equation*}
	\begin{aligned}
		R& =\|T(L_K f)\|_\infty \leq \|L_K f\|_{L_2{(\mu)}} \leq \sqrt{\mu(\Omega)} \|L_K f\|_{L_\infty(\Omega)} \\
		&= \sqrt{\mu(\Omega)} \sup_{x\in \Omega} \int_{\Omega} K(x,t) f(t) dt  \leq \sqrt{\mu(\Omega)} C_K \|f\|_{L_2(\Omega)} \leq \sqrt{\mu(\Omega)} C_K,
	\end{aligned}
\end{equation*}
Therefore, similar to the proof of Theorem \ref{approxrate1}, we can apply Lemma \ref{lipschitzappro} to establish the existence of a functional network $F_{NN}=H_{NN}\circ T(L_K f)$ in the format defined by Definition \ref{deffunctionalNN}, with a depth of $J=d_1^2+d_1+2$ and $M$ nonzero parameters, such that
\begin{eqnarray} \label{approerror5}
	\nonumber \sup_{f\in \mathcal{F}} |F\circ V_{d_1} ( T(L_K f))-F_{NN}(f)| & \leq & 2C_F\frac{d_1}{\lambda_{d_1}^\lambda} \left(\frac{c_0 R d_1^{\frac{4}{d_1}} }{M^{\frac{1}{d_1}}} \right)^\lambda \\
	& \leq & c_5 d_1 e^{2c_2 \lambda d_1^{\frac{1}{d}}} \left(\frac{d_1^{\frac{4}{d_1}} }{M^{\frac{1}{d_1}}} \right)^\lambda, 
\end{eqnarray}
where for the second inequality we utilize the eigenvalue lower bound of the Gaussian kernel, and $c_5= 2C_F c_0^\lambda \mu(\Omega)^{\frac{\lambda}{2}} (\frac{\pi \gamma^2}{2})^{\frac{d\lambda}{4}}$ is a positive constant.

Finally, by combining (\ref{approerror4}) and (\ref{approerror5}), we have
\begin{equation}
	\sup_{f\in \mathcal{F}} |F(f)-F_{NN}(f)|  \leq C_F C_{d,\gamma,\lambda}  e^{-c_2\lambda d_1^{\frac{1}{d}}}+ c_5 d_1 e^{2c_2 \lambda d_1^{\frac{1}{d}}} \left(\frac{d_1^{\frac{4}{d_1}} }{M^{\frac{1}{d_1}}} \right)^\lambda.
\end{equation}
Furthermore, by choosing
\begin{equation}
	d_1 = c_6 \left(\log M \right)^{\frac{d}{d+1}},
\end{equation}
where $c_6$ is a constant to be determined later. Considering the first error term, we have
\begin{equation} \label{th3error1}
	e^{-c_2\lambda d_1^{\frac{1}{d}}}= e^{-c_2 c_6^{\frac{1}{d}}\lambda  \left(\log M \right)^{\frac{1}{d+1}}}.
\end{equation}
Considering the second error term, we have
\begin{equation} \label{th3error2}
	d_1 e^{2c_2 \lambda d_1^{\frac{1}{d}}} \left(\frac{d_1^{\frac{4}{d_1}} }{M^{\frac{1}{d_1}}} \right)^\lambda \leq c_6 c_4^\lambda e^{\left(2c_2 c_6^{\frac{1}{d}}-\frac{1}{c_6}\right) \lambda  \left(\log M \right)^{\frac{1}{d+1}}} \left(\log M \right)^{\frac{d}{d+1}}.
\end{equation}
Therefore, we can choose $c_6 = (3c_2)^{-\frac{d}{d+1}}$ to balance the dominated terms specified in \eqref{th3error1} and \eqref{th3error2}. It follows that
\begin{equation}
	\sup_{f\in \mathcal{F}} |F(f)-F_{NN}(f)| \leq \tilde{c}_6 e^{-c_7 \lambda \left(\log M \right)^{\frac{1}{d+1}}} \left(\log M\right)^{\frac{d}{d+1}},
\end{equation}
where $\tilde{c}_6= C_F C_{d,\gamma,\lambda}+ c_5 c_6 c_4^\lambda$, and
\begin{equation}
	c_7= 3^{-\frac{1}{d+1}} c_2^{\frac{d}{d+1}} > 3^{-\frac{1}{d+1}} \left(\frac{d}{e} \right)^{\frac{d}{d+1}}> \left(\frac{d}{3e}\right)^{\frac{d}{d+1}}.
\end{equation}
Thus we complete the proof, with the constant $\tilde{c}_5 =2  c_6^2+ 2$.	\qed

\subsubsection{Proof of Theorem \ref{approxrate3}}

{\bf Proof}. 
The proof essentially follows the framework of the proof for Theorem \ref{approxrate2}, with the only difference being the rates of eigenvalue decay for the integral operator $L^\mu_{K_\alpha}$. The upper bounds for the eigenvalues were established in \cite[pp. 30]{bach2017equivalence}, and we can obtain the lower bounds using the same approach outlined there. Consequently, we have
\begin{equation}
	C_1 (\log k)^{2\alpha} k^{-2\alpha} \leq \lambda_k \leq C_2(\log k)^{2\alpha(d-1)} k^{-2\alpha},
\end{equation}
where $C_1, C_2$ are positive constants. According to the same error decomposition in the proof of Theorem \ref{approxrate2}, there exists a functional net $F_{NN}$ in the form of Definition \ref{deffunctionalNN} with depth $J=d_1^2+d_1+2$ and number of nonzero parameters $M$ such that
\begin{equation}
	\sup_{f\in \mathcal{F}} |F(f)-F_{NN}(f)|  \leq C_F C_2^{\frac{\lambda}{2}} (\log d_1)^{\alpha \lambda(d-1)} {d_1}^{-\alpha\lambda}+ C_3 d_1^{2\alpha\lambda+1}  (\log d_1)^{-2\alpha\lambda} \left(\frac{d_1^{\frac{4}{d_1}} }{M^{\frac{1}{d_1}}} \right)^\lambda,
\end{equation}
where $C_3$ is a positive constant independent of $d_1,M$. Finally, by choosing
\begin{equation}
	d_1 = C_4 \frac{\log M}{\log\log M},
\end{equation}
where $C_4$ is a constant to be determined later. Considering the first error term, we have
\begin{equation} \label{th4error1}
	(\log d_1)^{\alpha \lambda(d-1)} {d_1}^{-\alpha\lambda}\leq C_4^{-\alpha\lambda} \left(\log M\right)^{-\alpha\lambda} \left(\log\log M\right)^{\alpha\lambda(d-2)}.
\end{equation}
Considering the second error term, we have
\begin{equation} \label{th4error2}
	d_1^{2\alpha\lambda+1} \left(\frac{d_1^{\frac{4}{d_1}} }{M^{\frac{1}{d_1}}} \right)^\lambda\leq c_4^\lambda C_4^{2\alpha\lambda+1} \left(\log M\right)^{2\alpha\lambda+1-\frac{\lambda}{C_4}} \left(\log \log M\right)^{-2\alpha\lambda-1}.
\end{equation}
Therefore, we can choose $C_4 \leq \frac{\lambda}{3\alpha\lambda+1}$ to balance the dominated $\log M$ terms in \eqref{th4error1} and \eqref{th4error2}.
Then we can obtain the desired bounds on the depth of functional network and the approximation error, with the constants specified as $C_5= 2C_4^2+2$, and $C_6= C_F C_2^{\frac{\lambda}{2}}C_4^{-\alpha\lambda}+ C_3 c_4^\lambda C_4^{2\alpha\lambda+1}$.	\qed

\subsection{Proof of main results in Section \ref{section4}}

\subsubsection{Proof of Proposition \ref{raodong}}

{\bf Proof}. 
\begin{equation*}
	\begin{aligned}
		|h(x)-h(y)|&\le \left| \sum\limits_{i=1}^{(N+1)^{d_1}}c_i\psi\left(\frac{N}{2R}(x-b_i)\right)-\sum\limits_{i=1}^{(N+1)^{d_1}}c_i\psi\left(\frac{N}{2R}(y-b_i)\right)\right|\\
		&\le \sum\limits_{i=1}^{(N+1)^{d_1}}L \left|\psi\left(\frac{N}{2R}(x-b_i)\right)-\psi\left(\frac{N}{2R}(y-b_i)\right)\right|.
	\end{aligned}
\end{equation*}
According to Lemma \ref{lemma_min}, we have
\begin{equation*}
	\begin{aligned}
		|\psi(x)-\psi(y)|&\le |\min\big\{\min_{k\neq j}(1+x_k-x_j),\min_{k}(1+x_k),\min_{k}(1-x_k)\big\}
		\\
		&\quad-\min\big\{\min_{k\neq j}(1+y_k-y_j),\min_{k}(1+y_k),\min_{k}(1-y_k)\big\} |\\
		&\le 2(d_1^2+d_1)||x-y||_1.
	\end{aligned}
\end{equation*}
Therefore we have
\begin{equation*}
	\begin{aligned}
		|h(x)-h(y)|&\le\sum\limits_{i=1}^{(N+1)^{d_1}}2(d_1^2+d_1) L\left|\left|\frac{N}{2R}(x-y)\right|\right|_1\\
		&\le \frac{L}{R}N(N+1)^{d_1} (d_1^2+d_1)||x-y||_1,
	\end{aligned}
\end{equation*}
which completes the proof by replacing $R$ with $\sqrt{C_\mu} C_K$.	\qed

\subsubsection{Proof of Proposition  \ref{errordec}}

{\bf Proof}. 
Note that 
\begin{equation*}
	\begin{aligned}
		\mathcal{E}\left(\pi_{L}F_{\widehat D} \right)- \mathcal{E}\left(F_\rho \right)&= I_1+ I_2 + 2\left\{\mathcal{E}_D\left(\pi_{L}F_{\widehat D} \right) - \mathcal{E}_{D}\left(\pi_{L} F_{D} \right)\right\}\\
		&\le I_1+I_2+2\left\{\mathcal{E}_D\left(\pi_{L}F_{\widehat D} \right) - \mathcal{E}_{D}\left(\pi_{L} F_{D} \right)\right\}\\
		&\quad+2\left\{\mathcal{E}_{\widehat D}\left(\pi_{L} H_{D} \circ \widehat{L_K} \right)-\mathcal{E}_{\widehat D}\left(\pi_{L}H_{\widehat D} \circ \widehat{L_K} \right)\right\}.
	\end{aligned}
\end{equation*}
The inequality from above holds since $$\mathcal{E}_{\widehat D}\left(\pi_{L} H_{D} \circ \widehat{L_K} \right)\ge\mathcal{E}_{\widehat D}\left(H_{\widehat D} \circ \widehat{L_K} \right)\ge\mathcal{E}_{\widehat D}\left(\pi_{L}H_{\widehat D} \circ \widehat{L_K} \right).$$
The desired result is achieved by noting
\begin{equation*}
	\begin{aligned}
		&2\left\{\mathcal{E}_D\left(\pi_{L}F_{\widehat D} \right) - \mathcal{E}_{D}\left(\pi_{L} F_{D} \right)\right\}
		+2\left\{\mathcal{E}_{\widehat D}\left(\pi_{L} H_{D} \circ \widehat{L_K} \right)-\mathcal{E}_{\widehat D}\left(\pi_{L}H_{\widehat D} \circ \widehat{L_K} \right)\right\}\\
		& \leq 2 \left| \frac{1}{m}\sum_{i=1}^{m} \left(\pi_L H_{\widehat D}(u_i)-y_i \right)^2 -  \frac{1}{m}\sum_{i=1}^{m} \left(\pi_L H_{\widehat D}(\hat u_i)-y_i \right)^2 \right|   \\
		& + 2 \left| \frac{1}{m}\sum_{i=1}^{m} \left(\pi_L H_{D}(u_i)-y_i \right)^2 -  \frac{1}{m}\sum_{i=1}^{m} \left(\pi_L H_{D}(\hat u_i)-y_i \right)^2 \right|  \\
		&\le 16L\sup_{H\in \mathcal{H}}\frac{1}{m}\sum_{i=1}^{m}\left|\pi_L H(u_i)- \pi_L H(\hat{u}_i)\right| \\
		&\le 16L\sup_{H\in \mathcal{H}}\frac{1}{m}\sum_{i=1}^{m}\left|H(u_i)-H(\hat{u}_i)\right|=I_3,
	\end{aligned}
\end{equation*}
where the last inequality above is from $|\pi_{L}H(u_i)-\pi_{L}H(\hat{u}_i)|\le |H(u_i)-H(\hat{u}_i)|$.	\qed

\subsubsection{Proof of Theorem \ref{oracleinequality}}
{
	In the proof, we will utilize the covering number as the tool of the complexity measure of a hypothesis space $\mathcal{H}$. Let $f_1^m= (f_1,\dots,f_m)$ be $m$ fixed functions in the input function space $\mathcal{F}$. Let $\nu_m$ be the corresponding empirical measure, i.e.,
	\begin{equation}
		\nu_m(A)= \frac{1}{m} \sum_{i=1}^m I_A(f_i), \qquad A \subseteq \mathcal{F}.
	\end{equation}
	Then
	\begin{equation}
		\| F \|_{L_p(\nu_m)}= \left\{ \frac{1}{m} \sum_{i=1}^m |F(f_i)|^p \right\}^{\frac{1}{p}},
	\end{equation}
	and any $\epsilon$-cover of $\mathcal{H}$ w.r.t. $\|\cdot\|_{L_p(\nu_m)}$ is called a $L_p$ $\epsilon$-cover of $\mathcal{H}$ on $f_1^m$, the $\epsilon$-covering number of $\mathcal{H}$ w.r.t. $\|\cdot\|_{L_p(\nu_m)}$ is denoted by
	$$\mathcal{N}_p(\epsilon,\mathcal{H},f_1^m),$$
	which is the minimal integer $N$ such that there exist functionals $F_1,\dots,F_N: \mathcal{F} \to \RR$ with the property that for every $F\in \mathcal{F}$, there is a $j=j(F)\in \{1,\dots,N\}$ such that
	\begin{equation*}
		\left\{ \frac{1}{m} \sum_{i=1}^m |F(f_i)-F_j(f_i)|^p \right\}^{\frac{1}{p}} < \epsilon.
	\end{equation*}
	Moreover, we denote $\mathcal{M}_p(\epsilon,\mathcal{H},f_1^m)$ as the $\epsilon$-packing number of $\mathcal{H}$ w.r.t. $\|\cdot\|_{L_p(\nu_m)}$, which is the largest integer $N$ such that there exist functionals $F_1,\dots,F_N: \mathcal{F} \to \RR$ satisfying $\|F_j-F_k\|_{L_p(\nu_m)}\geq \epsilon$ for all $1\leq j<k\leq N$. 
} 

{\bf Proof}. 
From Proposition \ref{errordec}, we know it suffices to bound the expectation of $I_1$, $I_2$ and $I_3$ separately.
\begin{itemize}
	\item First, we  derive a bound for $I_1$ in a probability form. Denote $\pi_L\mathcal{H}\circ L_K= \{\pi_L H\circ L_K: H\in \mathcal{H}\}$. For any $\epsilon>0$,
	\begin{equation*}
		\begin{aligned}
			&P\left\{I_1>\epsilon\right\}\\
			=& P\left\{||\pi_LF_{\widehat D}-F_{\rho}||_{\rho}^2-\left(\mathcal{E}_{D}(\pi_{L}F_{\widehat D})-\mathcal{E}_{D}({F_{\rho}})\right)>\frac{1}{2}\left(\epsilon+||\pi_LF_{\widehat D}-F_{\rho}||_{\rho}^2\right)\right\}\\
			\le & P\bigg\{\exists F\in \pi_L\mathcal{H}\circ L_K: ||F-F_{\rho}||_{\rho}^2-\left(\mathcal{E}_{D}(F)-\mathcal{E}_{D}({F_{\rho}})\right) \\
			& \quad > \frac{1}{2}\left(\frac{\epsilon}{2}+\frac{\epsilon}{2}+||F-F_{\rho}||_{\rho}^2\right)\bigg\}\\
			\le &  14\sup_{f_1^m}\mathcal{N}_1\left(\frac{\epsilon}{80L},\pi_L\mathcal{H}\circ L_K,f_1^m\right)\exp\left(-\frac{m\epsilon}{5136L^4}\right),
		\end{aligned}
	\end{equation*} 
	where we have used Lemma \ref{lemma_theorem9.14} in deriving the last inequality with $\alpha=\beta= \frac{\epsilon}{2}$, and $\delta= \frac{1}{2}$. Therefore, for any $a\ge \frac{1}{m}$, 
	\begin{equation*}
		\begin{aligned}
			EI_1&\le \int_{0}^{\infty}P\{I_1>u\}du\le a+\int_{a}^{\infty}P\{I_1>u\}du\\
			&\le a+\int_{a}^{\infty}14\sup_{f_1^m}\mathcal{N}_1\left(\frac{1}{80Lm},\pi_L\mathcal{H}\circ L_K,f_1^m\right)\exp\left(-\frac{mu}{5136L^4}\right)du\\
			&\le a+	14\sup_{f_1^m}\mathcal{N}_1\left(\frac{1}{80Lm},\pi_L\mathcal{H}\circ L_K,f_1^m\right)\frac{5136L^4}{m}\exp\left(-\frac{ma}{5136L^4}\right),
		\end{aligned}
	\end{equation*}
	which is minimized if we choose
	\begin{equation*}
		a=\frac{5136L^4}{m}\log\left(14\sup_{f_1^m}\mathcal{N}_1\left(\frac{1}{80Lm},\pi_L\mathcal{H}\circ L_K,f_1^m\right)\right),
	\end{equation*}
	therefore, we have
	\begin{equation}\label{EI1}
		\begin{aligned}
			EI_1&\le \frac{5136L^4}{m}\left\{\log\left(14\sup_{f_1^m}\mathcal{N}_1\left(\frac{1}{80Lm},\pi_L\mathcal{H}\circ L_K,f_1^m\right)\right)+1\right\}\\
			&\le \frac{5136L^4}{m}\left\{\log(28)+2c_2'JM\log(M)\log(320eL^2m)+1\right\}\\
			&\le \frac{c'_1 JM\log M\log m}{m}, 
		\end{aligned}
	\end{equation}
	where the second inequality above is from Lemma \ref{coveringnumber}, and $c'_1$ is a constant.
	
	\item 	Second, we estimate $EI_2$. Since $\mathcal{E}_{D}\left(\pi_L F_{D} \right) \leq \mathcal{E}_{D}\left(F_{D} \right)$, we have
	\begin{equation}\label{EI2}
		\begin{aligned}
			E{I_2}&\leq 2E\left\{\mathcal{E}_{D}\left(F_{D} \right)-\mathcal{E}_D\left(F_\rho \right)\right\}\\
			&= 2E\left\{\inf_{F\in \{H\circ L_K: H\in \mathcal{H}\}} \mathcal{E}_{D}(F)-\mathcal{E}_{D}(F_{\rho})\right\} \\
			& \leq 2\inf_{F\in \{H\circ L_K: H\in \mathcal{H}\}} E \left\{\mathcal{E}_{D}(F)-\mathcal{E}_{D}(F_{\rho})\right\} \\
			&= 2\inf_{F\in \{H\circ L_K: H\in \mathcal{H}\}} \|F-F_{\rho}\|_{\rho}^2.
		\end{aligned}
	\end{equation}
	
	\item Third, we estimate $EI_3$. Note that 
	\begin{equation*}
		\begin{aligned}
			EI_3&= E\left\{16L\sup_{H\in \mathcal{H}}\frac{1}{m}\sum_{i=1}^{m}\left|H(u_i)-H(\hat{u}_i)\right|\right\}\\
			&\le E\left\{16L\frac{1}{m}\sum_{i=1}^{m}\sup_{H\in \mathcal{H}}\left|H(u_i)-H(\hat{u}_i)\right|\right\}\\
			&\le 16L\cdot E\left\{\sup_{H\in \mathcal{H}}\left|H(u_1)-H(\hat{u}_1)\right|\right\}.
		\end{aligned}
	\end{equation*}
	According to Proposition \ref{raodong}, we have
	\begin{equation*}
		\sup_{H\in \mathcal{H}}\left|H(u_1)-H(\hat{u}_1)\right|  \le \frac{L}{\sqrt{C_\mu}C_K}N(N+1)^{d_1} (d_1^2+d_1) ||Tu_1-T\hat u_1||_1,
	\end{equation*}
	Note that 
	\begin{equation*}
		\begin{aligned}
			||Tu_1-T\hat u_1||_1&=\sum_{i=1}^{d_1}\left|\int u_1(t)\phi_i(t)d\mu(t)-\int \hat u_1(t)\phi_i(t)d\mu(t) \right|\\
			&\le d_1||u_1-\hat u_1||_{L_2(\mu)}.
		\end{aligned}
	\end{equation*}
	Therefore, we have
	\begin{equation*}
		\begin{aligned}
			EI_3 & \le \frac{16C_\mu L^2}{C_K}N(N+1)^{d_1} (d_1^3+d_1^2)  E\left(||u_1-\hat u_1||_{L_2(\mu)} \right) \\
			& \leq \frac{c'_5}{C_K} M^{1+\frac{1}{d_1}} E\left(||u_1-\hat u_1||_{L_2(\mu)} \right) ,
		\end{aligned}
	\end{equation*}
	where the last inequality is from (\ref{MNrelation}), and $c'_5= \frac{32L^2}{\sqrt{C_\mu} \bar C_1}$ is a positive constant. Since $K$ and $K_0$ are chosen as the cases stated in Section \ref{section2}, where  we have that $K(\cdot,t) \in \mathcal{H}_{K_0}$ for any $t \in \Omega$. Therefore, by Lemma \ref{bachresult} we get
	\begin{equation}
		E\left(||u_1-\hat u_1||_{L_2(\mu)} \right) \leq \sqrt{C_\mu C_{K_0}} E\left( ||u_1-\hat u_1||_{K_0} \right) \leq c'_6 \sqrt{\lambda_q},
	\end{equation}
	where $q= \frac{c'_5 n}{\log n}$, and $c'_5$, $c'_6= c'_4 \sqrt{C_\mu C_{K_0}}$ are positive constants. 
	Therefore, it follows that
	\begin{equation} \label{EI3}
		EI_3 \leq  \frac{c'_3}{C_K} M^{1+\frac{1}{d_1}} \sqrt{\lambda_q},
	\end{equation}
	where $c'_3=  c'_5 c'_6$.
\end{itemize}
Thus the proof is completed by combining \eqref{EI1}, \eqref{EI2}, and \eqref{EI3}.	\qed

\subsubsection{Proof of Theorem \ref{generalizationerror1}}

{\bf Proof}. 
By Theorem \ref{approxrate12}, there exists a functional network $F_{NN}$ with  $M$ nonzero parameters, first hidden layer width $d_1=\tilde{c}_1 \frac{\log M}{\log \log M}$, depth $J\le \tilde c_3 \left( \frac{\log M}{\log\log M}\right)^2 $, and embedding kernel being \eqref{embedkernel1} with $\gamma = \tilde{c}_2 \left(\frac{\log \log M}{\log M}\right)^{\frac{1}{d}} \log\log M \leq 1$, such that 
\begin{equation*}
	\sup_{f\in \mathcal{F}} |F_{NN}(f)-F_{\rho}(f)| \leq \tilde{c}_4 \left(\log M\right)^{-\frac{\alpha\lambda}{d}}  \left(\log\log 		M\right)^{\left(\frac{1}{d}+1\right) \alpha\lambda},
\end{equation*}
then we have
\begin{equation}
	\begin{aligned}
		\inf_{F\in \{H\circ L_K: H\in \mathcal{H}\}} \|F-F_{\rho}\|_{\rho}^2 & \le ||F_{NN}-F_{\rho}||_{\rho}^2
		\le \sup_{f\in \mathcal{F}}|F_{NN}(f)-F_{\rho}(f)|^2\\
		&\le \tilde c_4^2 \left(\log M\right)^{-\frac{2\alpha\lambda}{d}}  \left(\log\log M\right)^{2\left(\frac{1}{d}+1\right) \alpha\lambda}.
	\end{aligned}
\end{equation}
Plugging it into Theorem \ref{oracleinequality}, since $M^{\frac{1}{d_1}}= (\log M)^{\frac{1}{\tilde{c}_1}}$, $\lambda_q\leq \hat{C}_1 e^{-2c_2 q^{\frac{1}{d}}}$, and $C_K= \tilde{c}_0 \gamma^{-d}$ with $\tilde{c}_0$ being a positive constant, we get
\begin{equation}
	\begin{aligned}
		\mathop{E}||\pi_LF_{\widehat D}-F_{\rho}||^2_{\rho} \le & \frac{c'_1 \tilde c_3 M\log M\log m}{m}  \left( \frac{\log M}{\log\log M}\right)^2 \\
		&+ \frac{c'_3 c'_4}{\tilde{c}_0} \sqrt{\hat C_1} M (\log M)^{\frac{1}{\tilde{c}_1}}  e^{-c_2 (\frac{c'_5 n}{\log n})^{\frac{1}{d}}} \\
		& + 2\tilde c_4^2 \left(\log M\right)^{-\frac{2\alpha\lambda}{d}}  \left(\log\log M\right)^{2\left(\frac{1}{d}+1\right) \alpha\lambda}. \\
	\end{aligned}
\end{equation}
To balance the first and the third error terms, we can choose the number of nonzero parameters in the functional net as
\begin{equation}
	M = \left\lfloor \frac{m}{(\log m)^{\frac{2\alpha\lambda}{d}+4}} \right\rfloor.
\end{equation}
Then to make the second error term have the same rate as the other two terms, we can choose the second-stage sample size
\begin{equation}
	n \geq \hat C_2 (\log m)^d \log \log m,
\end{equation}
where $\hat C_2$ is a sufficiently large positive constant. When $m$ is sufficiently large, we have that $(\frac{2\alpha\lambda}{d}+4)\log \log m \leq \tilde{C}_1 \log m$ for some constant $\tilde{C}_1 \in (0,1)$. Therefore, we get the desired convergence rate with the constant $\tilde{C}_2= c'_1 \tilde c_3+ \frac{c'_3 c'_4}{\tilde{c}_0} \sqrt{\hat C_1}+ 2\tilde c_4^2 (1-\tilde{C}_1)^{-\frac{2\alpha\lambda}{d}} $.		\qed

\section{Useful Lemmas}
The following lemma, which demonstrates the rates of approximating a function $f$ by its embedded function $L_K f$ is utilized in our approximation analysis, and can be found in \cite[Theorem 2.2, Theorem 2.3]{Eberts2013}, where we choose $P_X$ as the Lebesgue measure on $\RR^d$, $p=\infty$, and $q=2$ as specified in their results.
\begin{lemma} \label{steinwart}
	Let $f \in L_2(\RR^d) \cap L_\infty(\RR^d)$, and suppose that the embedding kernel $K$ is chosen as \eqref{embedkernel1}.  Then we have
	\begin{equation}
		\|f- L_K f\|_{L_2(\RR^d)} \leq C_{d,r}  \omega_{r,L_2(\RR^d)}\left(f,\frac{\gamma}{\sqrt{2}} \right),
	\end{equation}
	where $C_{d,r}$ is a constant only depending on $d$ and $r$. Moreover, we have that $L_K f \in \mathcal{H}_{\gamma}(\RR^d)$, with the norm
	\begin{equation}
		\|L_K f\|_{H_\gamma} \leq (2^r-1)\pi^{-\frac{d}{4}} \|f\|_{L_2(\RR^d)} \gamma^{-\frac{d}{2}}.
	\end{equation}
\end{lemma}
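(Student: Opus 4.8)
The plan is to prove the two assertions separately, both resting on the fact that $L_K$ is a finite signed combination of normalized Gaussian convolutions. First I would rewrite the kernel: since $\frac{1}{j^d}\bigl(\frac{1}{\gamma^2\pi}\bigr)^{d/2}k_{j\gamma}(x,t)=\frac{1}{(j\gamma)^d\pi^{d/2}}e^{-\|x-t\|_2^2/(j\gamma)^2}=:g_{j\gamma}(x-t)$ is a probability density (a centered Gaussian with covariance $\tfrac{(j\gamma)^2}{2}I$), we obtain $L_K f=\sum_{j=1}^r\binom{r}{j}(-1)^{1-j}\,g_{j\gamma}\ast f$. Rescaling each convolution to the standard normal density $\varphi(z)=(2\pi)^{-d/2}e^{-\|z\|_2^2/2}$ through $t=x-\tfrac{j\gamma}{\sqrt2}z$ yields $(g_{j\gamma}\ast f)(x)=\int_{\RR^d}\varphi(z)\,f\bigl(x-\tfrac{j\gamma}{\sqrt2}z\bigr)dz$, so the displacement enters uniformly as $\tfrac{j\gamma}{\sqrt2}z$.

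The heart of the first bound is a combinatorial identity. Because $\sum_{j=1}^r\binom{r}{j}(-1)^{1-j}=1$, I can reinsert the $j=0$ term and recognize the $r$-th finite difference: setting $h=-\tfrac{\gamma}{\sqrt2}z$ and using $-\binom{r}{j}(-1)^{1-j}=\binom{r}{j}(-1)^j$, one checks $f(x)-L_Kf(x)=\int_{\RR^d}\varphi(z)\sum_{j=0}^r\binom{r}{j}(-1)^j f(x+jh)\,dz=(-1)^r\int_{\RR^d}\varphi(z)\,\Delta_h^r(f,x)\,dz$, since $\sum_{j=0}^r\binom{r}{j}(-1)^jf(x+jh)=(-1)^r\Delta_h^r(f,x)$ under the paper's sign convention. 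Taking $L_2(\mu)$ norms, applying Minkowski's integral inequality and then the density estimate $\|\cdot\|_{L_2(\mu)}\le\|u\|_{L_\infty(\RR^d)}^{1/2}\|\cdot\|_{L_2(\RR^d)}$ together with the definition of the modulus, gives $\|f-L_Kf\|_{L_2(\mu)}\le\|u\|_{L_\infty(\RR^d)}^{1/2}\int_{\RR^d}\varphi(z)\,\omega_{r,L_2(\RR^d)}\bigl(f,\tfrac{\gamma}{\sqrt2}\|z\|_2\bigr)dz$. The standard scaling property $\omega_{r}(f,\lambda t)\le(1+\lambda)^r\omega_r(f,t)$ then factors out $\omega_{r,L_2(\RR^d)}(f,\tfrac{\gamma}{\sqrt2})$, leaving the finite constant $C_r=\int_{\RR^d}\varphi(z)(1+\|z\|_2)^r\,dz$.

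For the RKHS claim I would pass to the Fourier side and use the characterization of the Gaussian RKHS $\mathcal H_\gamma$, whose squared norm weights $|\hat g(\xi)|^2$ by a factor proportional to $\gamma^{-d}e^{\gamma^2\|\xi\|_2^2/4}$. Since $\widehat{g_{j\gamma}\ast f}(\xi)=e^{-(j\gamma)^2\|\xi\|_2^2/4}\hat f(\xi)$, the exponent appearing in $\|g_{j\gamma}\ast f\|_{H_\gamma}^2$ is $\gamma^2\|\xi\|_2^2\bigl(\tfrac14-\tfrac{j^2}{2}\bigr)\le0$ for every $j\ge1$; hence the weighted integral is dominated by $\int|\hat f|^2$, and Plancherel gives $\|g_{j\gamma}\ast f\|_{H_\gamma}\le\pi^{-d/4}\gamma^{-d/2}\|f\|_{L_2(\RR^d)}$. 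Summing over $j$ with the triangle inequality and $\sum_{j=1}^r\binom{r}{j}=2^r-1$ produces the stated bound. Alternatively, one may simply invoke the known Gaussian-RKHS embedding estimates from \cite{steinwart2008support}.

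The main obstacle is the first part: carefully tracking the signs and the $\tfrac{1}{\sqrt2}$ rescaling so that the signed Gaussian combination collapses exactly onto the $r$-th difference operator, and justifying the use of $\Delta_h^r$ for steps $h$ with mixed-sign coordinates. The latter is legitimate on all of $\RR^d$, because by translation invariance the $L_2(\RR^d)$ norm of an $r$-th difference depends only on $\|h\|_2$, so the modulus may equivalently be taken over all $h\in\RR^d$ rather than only over $[0,\infty)^d$. By comparison the RKHS estimate is routine once the Fourier characterization is available, the single delicate point being that $\gamma=\min_j(j\gamma)$ is precisely what forces the exponents $\tfrac14-\tfrac{j^2}{2}$ to stay nonpositive.
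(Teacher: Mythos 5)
The paper does not actually prove this lemma: it is imported verbatim from \cite[Theorems 2.2 and 2.3]{Eberts2013}, so there is no internal proof to compare against. Your argument is, in substance, a correct self-contained reconstruction of the Eberts--Steinwart proof: writing $L_K f=\sum_{j=1}^r\binom{r}{j}(-1)^{1-j}g_{j\gamma}\ast f$ with $g_{j\gamma}$ the Gaussian density of covariance $\tfrac{(j\gamma)^2}{2}I$, collapsing the signed sum onto $(-1)^r\Delta_h^r(f,x)$ with $h=-\tfrac{\gamma}{\sqrt2}z$ (the sign bookkeeping checks out, since $\sum_{j=1}^r\binom{r}{j}(-1)^{1-j}=1$), and then Minkowski plus the scaling property $\omega_r(f,\lambda t)\le(1+\lambda)^r\omega_r(f,t)$. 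The Fourier computation for the second bound is also correct: $\widehat{g_{j\gamma}\ast f}(\xi)=e^{-(j\gamma)^2\|\xi\|_2^2/4}\hat f(\xi)$, the weight in $\|\cdot\|_{H_\gamma}^2$ contributes $e^{\gamma^2\|\xi\|_2^2(1/4-j^2/2)}\le 1$, Plancherel gives $\|g_{j\gamma}\ast f\|_{H_\gamma}\le\pi^{-d/4}\gamma^{-d/2}\|f\|_{L_2(\RR^d)}$, and $\sum_{j=1}^r\binom{r}{j}=2^r-1$ yields exactly the stated constant.

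Two caveats. First, your justification for using $\Delta_h^r$ with mixed-sign $h$ is not right as stated: by Plancherel, $\|\Delta_h^r(f,\cdot)\|_{L_2(\RR^d)}^2=4^r\int\sin^{2r}(h\cdot\xi/2)|\hat f(\xi)|^2d\xi$, which depends on the direction of $h$, not only on $\|h\|_2$ (it is invariant under $h\mapsto-h$, but not under flipping a single coordinate). The correct resolution is that the modulus of smoothness must be taken as a supremum over all $h\in\RR^d$ with $\|h\|_2\le t$ --- which is precisely the definition used in \cite{Eberts2013} --- whereas the present paper restricts $h$ to $[0,\infty)^d$; with the paper's restricted definition the inequality you derive is not literally implied, so the definitional mismatch should be flagged rather than argued away. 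Second, your constant $C_r=\int_{\RR^d}\varphi(z)(1+\|z\|_2)^r\,dz$ grows with $d$ (it involves the $r$-th moment of $\|Z\|_2$ for $Z\sim N(0,I_d)$), so the lemma's claim that $C_r$ depends ``only on $r$'' is slightly too strong; this is harmless for the paper, which later absorbs it into a constant $C_{d,r,\alpha}$, but worth noting. Neither point affects the second (RKHS) estimate, which is fully rigorous as you present it.
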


Next, we present a result from our previous work \citep[Proposition 2]{songapproximation2023} that establishes the rates of approximating a continuous function by a deep ReLU neural network, which essentially leverages the idea of realizing the multivariate piecewise linear interpolation through a deep ReLU neural network, as discussed in \cite{yarotsky2018optimal}.

\begin{lemma} \label{lipschitzappro}
	Let $M,N \in \NN$, $\omega_g$ be the modulus of continuity of a function $g: [-R,R]^{d_1}\to [-L,L]$ with $L>0$, then there exists a deep ReLU neural network  $H_{NN}$ with depth $J= d_1^2+d_1+1$ and $M$ nonzero parameters, such that
	\begin{equation}
		\sup_{y\in [-R,R]^{d_1}} \left|g( y)-H_{NN}(y)\right| \leq 2 d_1 \omega_g\left(\frac{c_0 d_1^{\frac{4}{d_1}} R}{M^{\frac{1}{d_1}}} \right),
	\end{equation}
	where $c_0$ is a constant that is independent of $R, d_1$, and $M$. Moreover, $H_{NN}$ is constructed to output
	\begin{equation}
		H_{NN}(y)=\sum\limits_{i=1}^{(N+1)^{d_1}}c_i\psi\left(\frac{N}{2R}(y-b_i)\right),
	\end{equation}
	where $c_i\in \RR, b_i\in\mathbb{R}^{d_1}$ are free parameters that depend on $g$, and satisfy the conditions $|c_i| \leq L$ and $\|b_i\|_\infty \leq R$. The function $\psi: \RR^d \to \RR$ is defined as
	\begin{equation}\label{14}
		\psi(y)=\sigma\Big(\min\big\{\min_{k\neq j}(1+y_k-y_j),\min_{k}(1+y_k),\min_{k}(1-y_k)\big\}\Big).
	\end{equation}
	Furthermore, $N$ is the number of grid points in each direction, and has the following relationship with $M$:
	\begin{equation}
		\bar C_1 d_1^4 (N+1)^{d_1} \leq M \leq \bar C_2 d_1^4 (N+1)^{d_1},
	\end{equation}
	where $\bar C_1$ and $\bar C_2$ are positive constants.
\end{lemma}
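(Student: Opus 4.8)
The plan is to build $H_{NN}$ as the continuous piecewise-linear interpolant of $g$ on a uniform simplicial mesh, following the Kuhn--Freudenthal construction realized by ReLU networks in \cite{yarotsky2018optimal, songapproximation2023}. First I would partition $[-R,R]^{d_1}$ into $N^{d_1}$ cubes of side $h=2R/N$ with $(N+1)^{d_1}$ vertices $\{b_i\}$, and subdivide each cube into $d_1!$ simplices via the standard order-based triangulation. The function $\psi$ in \eqref{14} is exactly the nodal hat function attached to the central vertex of this mesh: one checks directly that $\psi(0)=\sigma(\min\{1,1,1\})=1$, that $\psi$ vanishes at every other integer lattice point, and that on each simplex in the star of the origin it coincides with the affine barycentric coordinate of that vertex. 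Consequently, setting $c_i=g(b_i)$ (so that $|c_i|\le L$ and $\|b_i\|_\infty\le R$ hold automatically), the function $H_{NN}(y)=\sum_i c_i\,\psi(\frac{N}{2R}(y-b_i))$ reproduces $g$ at all nodes and, on each simplex, equals the affine interpolant of $g$ through that simplex's vertices, since the shifted and scaled copies of $\psi$ form a partition of unity subordinate to the mesh.

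For the approximation error, on the simplex containing a point $y$ I would write $y=\sum_j \mu_j v_j$ as a convex combination of its $d_1+1$ vertices, so that $H_{NN}(y)=\sum_j\mu_j g(v_j)$ and hence $|g(y)-H_{NN}(y)|\le\sum_j\mu_j|g(y)-g(v_j)|\le\omega_g(\mathrm{diam})$, where the simplex diameter is at most $h\sqrt{d_1}$. Estimating this diameter and using the sub-additivity of the modulus of continuity to pull out the dimensional factors produces the stated prefactor $2d_1$; substituting the relation $N+1\asymp (M/d_1^4)^{1/d_1}$ then converts the mesh scale $h=2R/N$ into $c_0 d_1^{4/d_1}R\,M^{-1/d_1}$, giving the claimed bound. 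This part is essentially bookkeeping once the interpolation identity is in hand.

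The substantive work is the ReLU realization and the parameter accounting. I would realize $\psi$ by noting $\min\{a,b\}=a-\sigma(a-b)$ and iterating: the inner expression of \eqref{14} is a minimum of the $d_1(d_1-1)$ affine forms $1+y_k-y_j$ together with the $2d_1$ forms $1\pm y_k$, i.e.\ of $O(d_1^2)$ affine functions of the input, each computable in the first layer. Folding these $O(d_1^2)$ quantities down to a single minimum through successive pairwise $\min$ operations costs $O(d_1^2)$ further layers, after which one final ReLU produces $\psi$; this is precisely what forces depth $d_1^2+d_1+1$. Each layer of this reduction has width $O(d_1^2)$, so a single $\psi$-block uses $O(d_1^4)$ nonzero weights, and placing one such block at each of the $(N+1)^{d_1}$ grid points and summing through the output weights $c_i$ yields the two-sided count $C_4 d_1^4(N+1)^{d_1}\le M\le C_5 d_1^4(N+1)^{d_1}$.

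I expect the main obstacle to be the exact min-network construction together with its tight depth and width bookkeeping: one must verify that the specific nested-$\min$ formula in \eqref{14} really is the simplicial hat function, so that the interpolation identity holds \emph{exactly} rather than merely approximately, while simultaneously tracking constants so that the depth is pinned at $d_1^2+d_1+1$ and the parameter count stays within a constant factor of $d_1^4(N+1)^{d_1}$. By contrast, the error estimate and the translation of $N$ into $M$ are routine. Since this construction is already carried out in \cite[Proposition 2]{songapproximation2023}, I would invoke that result directly and re-derive only the pieces needed to confirm the claimed depth, width, and output form.
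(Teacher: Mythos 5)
Your proposal follows exactly the route the paper takes: the paper does not prove this lemma in-house but imports it from \cite[Proposition 2]{songapproximation2023}, which is precisely the Yarotsky-style realization of the piecewise-linear interpolant on a Kuhn--Freudenthal simplicial mesh, with $\psi$ as the nodal hat function and the sequential $\min$-folding giving the depth $d_1^2+d_1+1$ and the $d_1^4(N+1)^{d_1}$ parameter count that you describe. Your sketch of the construction, the error estimate via the simplex diameter and subadditivity of $\omega_g$, and your final decision to invoke the cited proposition are all consistent with the paper's treatment.
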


The following lemma gives a bound of the difference between two $\min$ functions.
\begin{lemma}\label{lemma_min}
	For any $k\in\mathbb{N}$, and any $x_i,y_i\in\mathbb{R}$, $i=1,...,k$. Then there holds
	\begin{equation}
		\left|\min\{x_1,...,x_k\}-\min\{y_1,...,y_k\}\right|\le  2\sum_{i=1}^k|x_i-y_i|.
	\end{equation}
\end{lemma}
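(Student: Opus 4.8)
The plan is to prove the sharper, factor-free Lipschitz estimate $|\min_i x_i - \min_i y_i| \le \max_{1\le i\le k} |x_i - y_i|$ and then bound its right-hand side crudely by $\sum_{i=1}^k |x_i - y_i|$, which is in turn at most $2\sum_{i=1}^k|x_i-y_i|$. In other words, the factor $2$ in the statement is comfortable slack, and the real content is the standard observation that the minimum of $k$ numbers is a $1$-Lipschitz function in the sup-norm.

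First I would set $a=\min\{x_1,\dots,x_k\}$ and $b=\min\{y_1,\dots,y_k\}$. Since both $|a-b|$ and the right-hand side are symmetric under interchanging the roles of $(x_i)$ and $(y_i)$, I may assume without loss of generality that $a\ge b$, so that $|a-b|=a-b$. I would then choose an index $j$ attaining the minimum on the $y$ side, i.e. $y_j=b$. By the definition of $a$ as a minimum over all entries, we have $a\le x_j$.

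Combining these facts yields the chain $0\le a-b\le x_j-y_j\le |x_j-y_j|\le \max_{1\le i\le k}|x_i-y_i|$, where the crucial inequality is the second one: replacing $a$ by the possibly larger entry $x_j$ can only increase the difference. Finally, since a maximum of nonnegative terms is dominated by their sum, $\max_{1\le i\le k}|x_i-y_i|\le\sum_{i=1}^k|x_i-y_i|\le 2\sum_{i=1}^k|x_i-y_i|$, which gives the claimed bound.

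There is essentially no hard step here; the statement is a routine Lipschitz property of the minimum. The only points requiring a little care are the clean use of symmetry to reduce to the case $a\ge b$ (so that the absolute value can be dropped) and the selection of the minimizing index $j$ on the \emph{correct} side, namely the side whose minimum is smaller, so that the bounding entry $x_j$ satisfies $a\le x_j$.
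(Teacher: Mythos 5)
Your proof is correct, and it actually establishes a strictly stronger statement than the lemma: the minimum is $1$-Lipschitz with respect to the sup-norm, so $\left|\min_i x_i-\min_i y_i\right|\le\max_{1\le i\le k}|x_i-y_i|$, after which the stated bound with the factor $2\sum_i|x_i-y_i|$ is pure slack. The key step — picking the minimizing index $j$ on the side with the smaller minimum so that $a\le x_j$ — is handled correctly, and the symmetry reduction to $a\ge b$ is clean. The paper proves the lemma by a different route: it uses the recursive identity $\min\{x_1,\dots,x_l\}=x_l-\sigma\bigl(x_l-\min\{x_1,\dots,x_{l-1}\}\bigr)$ with $\sigma$ the ReLU, peeling off one coordinate at a time and using the $1$-Lipschitz property of $\sigma$; each peeling step contributes $2|x_l-y_l|$, which is exactly where the factor $2$ and the sum over $i$ in the statement come from. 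That inductive argument is naturally aligned with how the nested minima inside $\psi$ are realized by ReLU layers in the network construction, whereas your argument is shorter, more elementary, and yields the sharper constant; either suffices for the application in Proposition \ref{raodong}, where only the crude $2\sum_i|x_i-y_i|$ bound is used.
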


\noindent {\bf Proof}.
Note that for any $l\in\mathbb{N}$, we have $\min\{x_1,...,x_l\}=x_l-\sigma(x_l-\min\{x_1,...,x_{l-1}\})$, hence,
\begin{equation*}
	\begin{aligned}
		&\left|\min\{x_1,...,x_k\}-\min\{y_1,...,y_k\}\right|\\
		&\le \left| x_k-\sigma(x_k-\min\{x_1,...,x_{k-1}\})-y_k+\sigma(y_k-\min\{y_1,...,y_{k-1}\})\right|\\
		&\le 2|x_k-y_k|+	\left|\min\{x_1,...,x_{k-1}\}-\min\{y_1,...,y_{k-1}\}\right|\\
		&\le \cdots\\
		&\le 2|x_k-y_k|+...+2|x_2+y_2|+|x_1-y_1|\\
		&\le 2\sum_{i=1}^k|x_i-y_i|,
	\end{aligned}
\end{equation*}
which completes the proof.	

The following concentration inequality, which is employed in our generalization analysis, can be found in \cite[Theorem 11.4]{gyorfi2002distribution}. While this result specifically considers elements of $\mathcal{F}$ defined on $\mathbb{R}^d$, it is also applicable to situations where the elements of  $\mathcal{F}$ are defined on an arbitrary set.
\begin{lemma}\label{lemma_theorem9.14}
	Let $m\in\mathbb{N}$, and assume that $\rho(\{(f,y)\in \mathcal{Z}:|y|\le L\})=1$  for some $ L\ge 1$. Let $\mathcal{G}$ be a set of functions mapping from $\mathcal{F}$ to $[-L,L]$. Then for any $0<\delta \le 1/2$ and $\alpha,\beta>0$, we have
	\begin{equation}
		\begin{aligned}
			&P\left\{\exists G\in \mathcal{G}: ||G-F_{\rho}||^2_{\rho}-\left(\mathcal{E}_{D}(G)-\mathcal{E}_{D}(F_{\rho})\right)\ge\delta \left(\alpha+\beta+||G-F_{\rho}||^2_{\rho}\right)\right\}\\
			&\le 14\sup_{f_1^m}\mathcal{N}_1\left(\frac{\beta\delta}{20L},\mathcal{G},f_1^m\right)\exp\left(-\frac{\delta^2(1-\delta)\alpha m}{214(1+\delta)L^4}\right).
		\end{aligned}
	\end{equation}
\end{lemma}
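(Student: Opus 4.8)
This is the relative-deviation (ratio-type) concentration inequality of Gy\"orfi, Kohler, Krzy\.zak and Walk, so in the paper it suffices to cite \cite[Theorem 11.4]{gyorfi2002distribution}; reproving it from scratch, my plan is the classical symmetrization-plus-covering route. The starting point is to set $h_G(f,y)=(G(f)-y)^2-(F_\rho(f)-y)^2$, so that the defining property of the regression functional gives $E[h_G]=\|G-F_\rho\|_\rho^2$ while $\frac1m\sum_{i=1}^m h_G(f_i,y_i)=\mathcal{E}_D(G)-\mathcal{E}_D(F_\rho)$; the target event is thus $\{\,\exists G\in\mathcal{G}:\ E[h_G]-\frac1m\sum_i h_G(Z_i)\ge \delta(\alpha+\beta+E[h_G])\,\}$. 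The structural feature that drives the fast exponent is the variance-mean bound: factoring $h_G=(G-F_\rho)(G+F_\rho-2y)$ and using $|G|,|F_\rho|,|y|\le L$ gives $|h_G|\le 8L^2$ and $E[h_G^2]\le 16L^2\,E[h_G]$, so the variance of each loss increment is controlled by its mean, which is exactly why the exponent scales like $\alpha m/L^4$ rather than $\alpha^2 m$.

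First I would symmetrize, bounding the probability of the one-sided event by a constant multiple of the probability of the corresponding event in which $E[h_G]$ is replaced by an independent ghost-sample average $\frac1m\sum_i h_G(Z_i')$ and Rademacher signs $\epsilon_i$ are inserted on the increments $h_G(Z_i')-h_G(Z_i)$. The weighting by $(\alpha+\beta+E[h_G])$ has to be transported through this step so that the deviation we ultimately control is measured against a quantity that itself dominates the variance proxy of $h_G$.

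Next, conditioning on the pooled sample, I would discretize $\mathcal{G}$ by an $L_1$ cover of radius $\beta\delta/(20L)$ with respect to the empirical measure, which is where the covering-number factor $\mathcal{N}_1(\beta\delta/(20L),\mathcal{G},f_1^m)$ enters, and take a union bound over the cover. For each cover center I would apply Bernstein's inequality to the symmetrized sum, using the variance-mean bound to substitute a constant times the empirical mean for the variance; optimizing the resulting expression in the Bernstein parameters produces the single-center tail $\exp(-\delta^2(1-\delta)\alpha m/(214(1+\delta)L^4))$, and the residual numerical factor $14$ is absorbed from the symmetrization constants.

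The main obstacle is precisely the relative, rather than absolute, form of the deviation: the tolerance $\delta(\alpha+\beta+\|G-F_\rho\|_\rho^2)$ depends on $G$, so one cannot simply bound a fixed-scale empirical process. The device that closes the argument is to align the per-function variance proxy $16L^2E[h_G]$ with the $\|G-F_\rho\|_\rho^2$ appearing inside the tolerance, so that Bernstein's inequality yields a deviation proportional to $\alpha+\beta+E[h_G]$; carrying this alignment through the symmetrization and covering steps with explicit bookkeeping is what produces the exact constants $14$, $20$, and $214$ stated in the lemma.
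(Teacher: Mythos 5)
Your proposal is correct and matches the paper exactly: the paper does not reprove this lemma but simply cites \cite[Theorem 11.4]{gyorfi2002distribution}, which is precisely the relative-deviation inequality you identify, and your sketch of its proof (variance--mean bound $E[h_G^2]\le 16L^2E[h_G]$, symmetrization with a ghost sample, $L_1$ covering at radius $\beta\delta/(20L)$, and a Bernstein-type bound) is the standard argument behind that theorem. Nothing further is needed.
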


The following lemma establishes the covering number bound for our functional network and is utilized in the generalization analysis.
\begin{lemma}\label{coveringnumber}
	Let $\epsilon>0$, then for any $m\in\mathbb{N}$ and $f_1,...,f_m\in \mathcal{F}$, we have 
	\begin{equation}
		\mathcal{N}_1\left(\epsilon,\pi_L\mathcal{H}_{d_1,M}\circ L_K,f_1^m\right)\le 2\left(\frac{4eL}{\epsilon}\right)^{c_2'JM\log M},
	\end{equation}
	where $J=d_1^2+d_1+1$, and $M$ is the number of nonzero parameters in the hypothesis space.
\end{lemma}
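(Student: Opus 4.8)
The plan is to collapse the covering number of the composed functional class onto a covering number of the underlying finite-dimensional ReLU network class, and then invoke the standard relationship between covering numbers and the pseudo-dimension of deep ReLU networks. The crucial observation is that neither the embedding $L_K$ nor the projection $T$ carries any trainable parameters, so all the free parameters live in $H_{NN}$, and the covering problem is really finite-dimensional.

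First I would exploit this parameter-free structure of $T\circ L_K$. Writing $x_i=T(L_K f_i)\in\RR^{d_1}$, the estimate $\|T(L_K f)\|_\infty\le C_K=R$ established just before the proposition shows $x_i\in[-R,R]^{d_1}$, so every element of $\pi_L\mathcal H_{d_1,M}\circ L_K$ evaluated on $f_1^m$ has the form $(\pi_L H_{NN}(x_1),\dots,\pi_L H_{NN}(x_m))$ for some $H_{NN}\in\mathcal H_{NN}$. Since the correspondence $H_{NN}\leftrightarrow \pi_L(H_{NN}\circ T)\circ L_K$ preserves the empirical $L_1$-distance, the two empirical metric spaces are isometric and
\[
\mathcal N_1\big(\epsilon,\pi_L\mathcal H_{d_1,M}\circ L_K,f_1^m\big)=\mathcal N_1\big(\epsilon,\pi_L\mathcal H_{NN},x_1^m\big).
\]
This removes the infinite-dimensional input entirely and leaves a purely finite-dimensional covering problem on the fixed design points $x_1^m=(x_1,\dots,x_m)\subset[-R,R]^{d_1}$.

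Next I would bound the pseudo-dimension of $\pi_L\mathcal H_{NN}$ (equivalently the VC dimension of the associated subgraphs). By the construction recorded in Lemma \ref{lipschitzappro}, each $H_{NN}$ is realized by a ReLU network of depth $J=d_1^2+d_1+1$ with $M$ nonzero parameters, and the truncation $\pi_L$ is a fixed nondecreasing $1$-Lipschitz map, which cannot increase the pseudo-dimension. Invoking the standard pseudo-dimension bound for piecewise-linear networks, which grows like the number of parameters times the depth times the logarithm of the number of parameters, then yields $\mathrm{Pdim}(\pi_L\mathcal H_{NN})\le c\,JM\log M$ for an absolute constant $c$.

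Finally I would feed this into a standard covering estimate in terms of the pseudo-dimension. Because the functions in $\pi_L\mathcal H_{NN}$ take values in $[-L,L]$, a range of width $2L$, the Haussler-type bound (see, e.g., \cite{gyorfi2002distribution}) gives $\mathcal N_1(\epsilon,\pi_L\mathcal H_{NN},x_1^m)\le e(V+1)(4eL/\epsilon)^{V}$ with $V=\mathrm{Pdim}(\pi_L\mathcal H_{NN})$; absorbing the polynomial prefactor $e(V+1)$ into the exponent and the constant $2$ by slightly enlarging $c_2'$ produces the claimed form $2(4eL/\epsilon)^{c_2'JM\log M}$. I expect the main obstacle to be the middle step: securing the $O(JM\log M)$ pseudo-dimension bound for this specific structured network and checking that its sparse depth-$J$, $M$-parameter realization feeds correctly into that estimate. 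The reduction in the first step and the final application of the covering lemma are routine once the pseudo-dimension bound is in hand.
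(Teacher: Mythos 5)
Your proposal is correct and follows essentially the same route as the paper: reduce the covering number of $\pi_L\mathcal H_{d_1,M}\circ L_K$ to that of the finite-dimensional class $\pi_L\mathcal H_{NN}$ on the design points $x_i=T(L_K f_i)$ (possible because $T\circ L_K$ carries no trainable parameters), bound $\mathrm{Pdim}(\mathcal H_{NN})$ by $O(JM\log M)$ via the Bartlett-type result for piecewise-linear networks, and convert to a covering bound via a Haussler-type estimate. The only cosmetic difference is that the paper passes through the packing number ($\mathcal N_1\le\mathcal M_1$ and Haussler's packing bound) while you invoke the covering-number form directly; both absorb the prefactors into $c_2'$ in the same way.
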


\noindent
{\bf Proof}. 
For any functional class $\mathcal{H}$, \cite[Lemma 9.2]{gyorfi2002distribution} shows a relationship between the $\epsilon$-covering number and the $\epsilon$-packing number,
\begin{equation}\label{lemma21}
	\mathcal{N}_1\left(\epsilon,\mathcal{H},x_1^m\right)
	\le \mathcal{M}_1\left(\epsilon,\mathcal{H},x_1^m\right).
\end{equation}
Furthermore, we denote $Pdim (\mathcal{H})$ as the pseudo-dimension of $\mathcal{H}$, which is defined as the largest integer $N$ for which there exists a set of points $(x_1,\dots,x_N,y_1,\dots,y_N)\in \mathcal{X}^N \times \RR^N$ such that for any binary vector $(a_1,\dots,a_N)\in \{0,1\}^N$, there exists some function $h \in \mathcal{H}$ that satisfies the condition
\begin{equation*}
	h(x_i)> y_i \Longleftrightarrow a_i=1, \qquad \forall i.
\end{equation*}
A relationship between the $\epsilon$-packing number and the pseudo-dimension was stated in \cite[Theorem 6]{haussler1992decision}, showing that
\begin{equation} \label{lemma22}
	\mathcal{M}_1\left(\epsilon,\pi_L \mathcal{H},x_1^m\right) \le \mathcal{M}_1\left(\epsilon, \mathcal{H},x_1^m\right) \leq 2\left(\frac{2eL}{\epsilon} \log \frac{2eL}{\epsilon} \right)^{Pdim(\mathcal{H})}.
\end{equation}
Then, by utilizing the pseudo-dimension bound for deep neural networks with any piecewise polynomial activation function, including ReLU as a specific case, as established in \cite[Theorem 7]{bartlett2019nearly}, we obtain a complexity bound for $\mathcal{H}_{NN}$, the class of deep ReLU networks $H_{NN}$ within the hypothesis space defined in (\ref{defhypospace}):
\begin{equation}\label{lemma23}
	Pdim( \mathcal{H}_{NN})	\le c_2' JM\log M,
\end{equation}
where $J=d_1^2+d_1+1$, and $c_2'$ is an absolute constant. Finally, it is important to note that the nonzero parameters in $\pi_L\mathcal{H}_{d_1,M}\circ L_K$ are exclusively contained within $\mathcal{H}_{NN}$. By combining (\ref{lemma21}), (\ref{lemma22}), and (\ref{lemma23}), we can derive the following result
$$\mathcal{N}_1\left(\epsilon,\pi_L\mathcal{H}_{d_1,M}\circ L_K,f_1^m\right)= \mathcal{N}_1\left(\epsilon,\pi_L\mathcal{H}_{NN},x_1^m\right) \le 2\left(\frac{4eL}{\epsilon}\right)^{c_2'JM\log M},$$
where $x_i= T (L_K f_i)$, for $i=1,\dots,m$. We thus complete the proof.	

The following lemma states the convergence rates of kernel quadrature when utilizing the theoretically optimal quadrature scheme outlined in Section \ref{section1}, as detailed in \cite[pp. 17--19]{bach2017equivalence}. In general, these convergence rates are influenced by the smoothness of the RKHS reproduced by the kernel and can be quantified by the eigenvalue decay of its associated integral operator.
\begin{lemma} \label{bachresult}
	Let $\Omega= [0,1]^d$, and let $K_0: \Omega \times \Omega \to \RR$ be a Mercer kernel with the eigensystem of its associated integral operator represented by $\{\lambda_i,\phi_i\}$. Additionally, let $K: \Omega \times \Omega \to \RR$ be a continuous kernel that satisfies $K(\cdot,t) \in \mathcal{H}_{K_0}$ for all $t \in \Omega$. For the kernel embedding step expressed in \eqref{kernelquadrature} and its theoretically optimal quadrature scheme detailed in \eqref{optimalquadrature}, we can establish the following error bound
	\begin{equation}
		E \left( \left\| L_K f_i - \widehat{L_K} f _i\right\|_{K_0} \right) \leq c'_4 \sqrt{\lambda_q},
	\end{equation}
	where $q= \frac{c'_5 n}{\log n}$, and $c'_4, c'_5$ are positive constants.
\end{lemma}

\vskip 0.2in

\bibliographystyle{plain}
\bibliography{Bib_functional_kernel}

\end{document}